\newtheorem{theorem}{Theorem}
\newtheorem{lemma}{Lemma}
\newtheorem{corollary}{Corollary}
\theoremstyle{definition}
\newtheorem{definition}{Definition}[section]
\newtheorem{proposition}{Proposition}
\newcommand{\CI}{\raisebox{0.05em}{\rotatebox[origin=c]{90}{$\models$}}}
\newcommand{\nCI}{\centernot{\CI}}
\DeclareMathOperator\arctanh{arctanh}
\newtheorem{assumption}{Assumption}
\begin{document}

\title{A Fast PC Algorithm with Reversed-order Pruning and A Parallelization Strategy}
\author{Kai Zhang, Chao Tian, Todd Johnson, Kun Zhang, Xiaoqian Jiang\thanks{Kai Zhang, Todd Johnson and Xiaoqian Jiang are with the School of Biomedical Informatics, UT Health Science Center at Houston, Houston, TX 77030, USA (email: \{Kai.Zhang.1, Todd.R.Johnson, Xiaoqian.Jiang\}@uth.tmc.edu), Chao Tian is with the Department of Electrical and Computer Engineering, Texas A\&M University, College Station, TX 77843, USA (email: Chao.Tian@tamu.edu), and Kun Zhang is with the Department of Philosophy, Carnegie Mellon University, Pittsburgh, PA 15213, USA (email: Kunz1@cmu.edu).}}
\date{}
\maketitle

\begin{abstract}
The PC algorithm is the state-of-the-art algorithm for causal structure discovery on observational data. It can be computationally expensive in the worst case due to the conditional independence tests are performed in an exhaustive-searching manner. This makes the algorithm computationally intractable when the task contains several hundred or thousand nodes, particularly when the true underlying causal graph is dense. We propose a critical observation that the conditional set rendering two nodes independent is non-unique, and including certain redundant nodes do not sacrifice result accuracy. Based on this finding, the innovations of our work are two-folds. First, we innovate on a reserve order linkage pruning PC algorithm which significantly increases the algorithm's efficiency. Second, we propose a parallel computing strategy for statistical independence tests by leveraging tensor computation, which brings further speedup. We also prove the proposed algorithm does not induce statistical power loss under mild graph and data dimensionality assumptions. Experimental results show that the single-threaded version of the proposed algorithm can achieve a 6-fold speedup compared to the PC algorithm on a dense 95-node graph, and the parallel version can make a 825-fold speed-up. We also provide proof that the proposed algorithm is consistent under the same set of conditions with conventional PC algorithm. 
\end{abstract}

\section{Introduction}
Causal discovery has become an appealing research direction, by its ability of extracting meaningful cause-effect information from the otherwise unwieldy mass of data. It has been successfully applied to various disciplines of sciences, such as statistics \cite{pearl2009causal}, public policy \cite{zajonc2012essays}, economics \cite{varian2016causal}, biology \cite{shipley2016cause} and health care analytics \cite{glass2013causal}, and it has revolutionized the way we utilize clinical data.

The PC algorithm is one of the most successful and widely-used constraint-based algorithms that have asymptotic correctness guarantee \cite{spirtes1991algorithm}. A comprehensive study on the accuracy of various causal discovery algorithms \cite{singh2017comparative} demonstrated that on large datasets, the PC algorithm outperformed other algorithms, including GES, FCI, FCI+, MMHC, and the Active Learning Method \cite{tong2001active} on four standard performance metrics, the F-score, area under the receiver operating characteristic curve (AUC), structural Hamming distance (SHD) and structural intervention distance (SID). The PC algorithm including its variants have also been implemented by several open source libraries and packages, such as Tetrad Toolbox \cite{scheines1998tetrad}, Causal Discovery Toolbox \cite{kalainathan2019causal}, R package pcalg \cite{kalisch2012causal} and bnlearn \cite{scutari2009learning}, etc. 

The PC algorithm does not scale well on high-dimensional settings due to the high time complexity caused by the heuristic searching in the algorithm. To this end, Le at al. proposed a parallel PC algorithm by grouping CI tests and distributing them over different processes, and demonstrated their algorithm could finish within 6 hours on an 8-core CPU on some test datasets on which PC algorithm could not finish within 24 hours \cite{le2016fast}. Madsen et al. proposed two parallelization methods: a horizontal parallel approach that showed speed-up on both shared memory system and a cluster system when using processes, and a Balanced Incomplete Block (BIB)-design-based approach for marginal independence testing, which shows speed-up on shared memory systems when using threads \cite{madsen2017parallel}. Zare et al. introduced two parallel PC algorithms, cuPC-E, and cuPC-S, achieving 500-fold and 1300-fold speed-up compared to the serial implementation, respectively \cite{zare2018cupc}. The former adopts the idea of parallelizing both the tests for multiple edges and multiple tests for a given edge, however, this algorithm incurs additional overhead and unnecessary tests whereas the latter employs a local sharing strategy to avoid such overhead. 

However, PC algorithm's intrinsic searching strategy entails the time complexity of factorial order, and the parallelization techniques only bring very limited speed gain. This promotes us to pursue algorithm-level improvements rather than parallization tricks. In this study, we propose an improved PC algorithm that significantly improves the speed of the PC algorithm without scarifying the result accuracy. The main contributions are as follows.

\begin{itemize}
\item \textbf{Reverse-order pruning PC algorithm}. We propose a reverse-order pruning PC algorithm that shows significant improvement in the algorithm speed. We prove the new algorithm is asymptotically consistent on high dimensional datasets under the same set of conditions of PC algorithm without enforcing any further assumptions. 

\item \textbf{Parallelization strategy and GPU-acceleration}. We propose a novel parallelization strategy for computing partial correlation coefficients by utilizing tensor computations. Such parallelization can be accelerated with GPUs. 
\end{itemize}

The rest of the paper is organized as follows. Section 2 provides some backgstage on causal discovery. Section 3 presents our main algorithm. In section 4, we provide the parallelization strategy and the GPU acceleration. Section 5 includes a detailed theoretical analysis of the consistency of the algorithm's result. Section 7 concludes our work. Our code is available on Github \footnote{https://github.com/anotherkaizhang/fastPC}.

\section{Preliminaries}
\subsection{Bayesian Network}
\textit{Bayesian network} is a probabilistic graphical model using directed acyclic graphs (DAG) to represent the joint probability distribution of a set of random variables, which provides a comprehensive and compact way to represent the interactions among random variables. A Bayesian network is referred to as a \textit{causal Bayesian network}, or a \textit{causal graph}, \textit{causal network} \cite{heckerman1995learning} when the directed edges are logically interpreted as cause-and-effect relation among random variables,

A graph is mathematically defined as $G=(\mathcal{V}, \mathcal{E})$, where $\mathcal{V}$ is the index set of a non-empty set of nodes (or vertices) $X_{\mathcal{V}}=\{X_i: i \in \mathcal{V}\}$, and $\mathcal{E}$ is a (possibly empty) set of undirected or directed edges. 
In this work, we only focus on the acyclic graphs, and the graph $G$ can be completely directed (DAG), completely undirected, or partially directed (PDAG). For a directed edge $X_i \rightarrow X_j$, $X_i$ is called a \textit{parent} of $X_j$ and $X_j$ is called a \textit{child} of $X_i$. The \textit{ancestors} of $X_i$ is defined as its parents, and parents of parents, and so forth (continues recursively). Similarly, its \textit{descendents} is defined as its children, children of children, and so forth. If there is an undirected or directed edge between two nodes $X_i, X_j$, then the two nodes $X_i, X_j$ are called \textit{adjacent}. We use $adj(G, i)$ to denote the set of nodes adjacent to the node $i$ in graph $G$, and $adj(G,i,j)$ to denote the set of nodes adjacent to either node $i$ or $j$ in graph $G$, except for $i,j$ themselves. A \textit{path} is a sequence of distinct adjacent nodes. A \textit{directed path} between two variables $X_i, X_j$ is defined by identifying a sequence of nodes $\{X_{k_1}, \ldots, X_{k_m}\}$ such that there exists a consecutive sequence of directed edges $X_i \rightarrow X_{k_1}$, $X_{k_1} \rightarrow X_{k_2}$, $\ldots$, $X_{k_{m-1}} \rightarrow X_{k_m}$ and $X_{k_m} \rightarrow X_j$. The $path(G, i, j)$ is used to denote the set of nodes on the undirected paths between $i$ and $j$ in undirected graph $G$. A \textit{directed cycle} is defined as a directed path from a node to itself. The \textit{skeleton} of a DAG is the graph after removing the direction of all edges. A \textit{completed partially directed acyclic graph (CPDAG)} is a DAG with some edges that have undetermined direction. A \textit{$v$-structure} (or \textit{unshielded collider}) in graph $G$ is a triple $(X_i, X_k, X_j)$ of structure $X_i \rightarrow X_k \leftarrow X_j$ and the nodes $X_i, X_j$ are not adjacent. The PC algorithm uses \textit{$d$-separation} on the causal graph to infer conditional independence relationship embedded in the underlying probabilistic model. 

\begin{definition}[$d$-separation \cite{lauritzen1996graphical}]
A path is $d$-separated by a set of nodes $\mathcal{Z}$ if and only if \\
(i) the path contains a chain $X_i \rightarrow X_m \rightarrow X_j$ or a fork $X_i \leftarrow X_m \rightarrow X_j$ such that the middle node is in $\mathcal{Z}$, or \\
(ii) the path contains a collider $X_i \rightarrow X_m \leftarrow X_j$ such that the middle node $X_m$ and its descendants are not in $\mathcal{Z}$. \\
$X_i$ and $X_j$ are said to be $d$-separated by a set of nodes $\mathcal{Z}$ if and only if $\mathcal{Z}$ blocks every path between $X_i$ and $X_j$, denoted as $X_i \CI X_j \lvert \mathcal{Z}$.
\end{definition}

In the following, we use script letters to denote a set of variables or nodes, such as $\mathcal{X}$, and sometimes use $X_{\mathcal{V}}$ for the same purpose, when we want to emphasize the random variables (or nodes) that are indexed by the index set $\mathcal{V}$. We use $X_i$ to interchangeably denote both a random variable and a node in the DAG.

The causal Markov and faithfulness conditions, together with the independent and identically distributed (i.i.d) sampling, no latent variables, and no selection bias, constitute a sufficient set of assumptions for the PC algorithm to converge to the correct Markov equivalence class in the limit of infinite sample size \cite{spirtes2000causation}. 

\begin{assumption}[Markov Condition] \cite{spirtes2000causation}]\label{as:causalmarkov} 
A directed acyclic graph (DAG) $G(\mathcal{V}, \mathcal{E})$ and a probability distribution $P(X_{\mathcal{V}})$ satisfy the Markov condition if and only if for every variable $X_i, i \in \mathcal{V}$, $X_i$ is independent of $X_{\mathcal{V}} \backslash (descendants(X_i) \cup parents(X_i))$ given $parents(X_i)$. 
\end{assumption}

The Markov condition implies that the joint density function of $X_{\mathcal{V}}$ can be represented by 

\begin{equation*}
f(X_{\mathcal{V}}) = \prod_{X \in X_{\mathcal{V}}} f(X \; \lvert \; parents(X)). 
\end{equation*}

The Markov condition ensures every independence relation acquired by implying $d$-separation on the causal graph holds in the joint probabilistic distribution. The \textit{Causal Markov Condition} states that a node is independent of all other nodes except its direct cause and effects, conditioning on all of its direct cause variables \cite{hausman1999independence}. The causal Markov condition and Markov condition are equivalent if the Bayesian network is a causal graph. 

\begin{assumption}[Faithfulness Condition \cite{judea2000causality,spirtes2000causation}]\label{as:faithfulness} 
The conditional independences in the causal graph are exactly those in the probability distribution.
\end{assumption}

The faithfulness condition assures that there are no additional independencies in the joint probabilistic model which cannot be obtained by applying $d$-separation on the causal graph. This assumption is necessary because, among all joint probabilistic distributions entailed by the causal graph, a deliberate tuning of the parameters of the functional model could produce extra independence relations that cannot be successfully captured by the DAG \cite{cartwright1984laws}. The faithfulness assumption rules out such very unlikely to happen but still existing events. 

If both the causal Markov condition and the faithfulness condition hold, then there is a bijection relationship between the conditional independence implied by $d$-separation on DAG and those embraced by the underlying probabilistic model.

\begin{center}
\textit{Node $X$ and $Y$ are $d$-separated by the set of nodes $\mathcal{Z}$ in the DAG $\Leftrightarrow$ Variable $X$ and $Y$ are independent given the set of variables $\mathcal{Z}$}.
\end{center}

\subsection{The PC and PC-stable Algorithm}
Constraint-based causal discovery algorithms such as PC and PC-stable in general produce a CPDAG \cite{chickering2002optimal, spirtes2000causation}, representing a Markov equivalence class \cite{meek2013causal} of DAGs.

\begin{theorem}[Markov Equivalence Class \cite{verma1991equivalence}]
Two directed acyclic graphs $G(\mathcal{V}, \mathcal{E})$ and $G'(\mathcal{V}, \mathcal{E})$ are Markov equivalent if and only if they have the same adjacencies and same unshielded colliders.
\end{theorem}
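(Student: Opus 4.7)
The plan is to establish the biconditional by proving each direction separately. The forward direction ($\Rightarrow$) follows from characterizing adjacency and v-structures purely in the language of d-separation; the backward direction ($\Leftarrow$) contains the substantive content, since it requires showing that skeleton together with unshielded colliders already determines every d-separation relation of the DAG.

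For ($\Rightarrow$), the first step is to argue that Markov equivalent DAGs share the same skeleton. The key observation is that two nodes $X_i, X_j$ are adjacent in a DAG if and only if no conditioning set $\mathcal{Z} \subseteq \mathcal{V} \setminus \{i,j\}$ d-separates them, because the direct edge itself is a length-one path that no set blocks. Since Markov equivalence forces the families of d-separations to coincide, adjacency must coincide as well. Second, I would show preservation of unshielded colliders by characterizing such a triple $(X_i, X_k, X_j)$ as being a collider iff there exists some $\mathcal{Z}$ that d-separates $X_i, X_j$ with $X_k \notin \mathcal{Z}$: when $X_k$ is a chain or fork node on the triple, any separating $\mathcal{Z}$ must contain $X_k$ to block the length-two path; when $X_k$ is a collider, $X_k$ may be safely excluded from any minimal separator. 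This property is stated purely in terms of d-separations, so it must agree between $G$ and $G'$.

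For ($\Leftarrow$), the cleanest route is to reduce the problem to a local transformation argument. I would invoke Chickering's covered edge reversal theorem: two DAGs that share a skeleton and v-structure set can be connected by a sequence of covered edge reversals, each reversal swapping the orientation of an edge $X_i \rightarrow X_j$ whose endpoints have identical parent sets aside from the edge itself. It is then a local check that covered-edge reversal preserves every conditional independence relation, so Markov equivalence follows by transitivity along the reversal chain. The main obstacle lies precisely in justifying the reversal invariance, which needs a careful case analysis on how a path through the reversed edge can change its collider pattern. I would handle this by showing that any path blocked in the pre-reversal graph admits a matching blocked path in the post-reversal graph, possibly after rerouting through the shared parents; shielded triples, even if their internal orientations differ between $G$ and $G'$, cannot threaten the argument because any activated path through a shielded triple can be deflected through the shielding edge without altering d-separation status.
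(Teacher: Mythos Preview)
The paper does not prove this theorem at all: it is quoted as a classical result attributed to Verma and Pearl \cite{verma1991equivalence} and used only as background for the PC algorithm discussion. There is therefore nothing in the paper's body or appendix to compare your argument against.

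That said, your sketch is a reasonable outline of a standard proof. The forward direction is correct and essentially the original argument: adjacency and unshielded-collider status are both expressible purely through d-separation statements, hence invariant under Markov equivalence. For the backward direction, be aware of a potential circularity: Chickering's covered-edge-reversal result is often \emph{derived from} the Verma--Pearl characterization rather than the other way around. Your plan can be made non-circular, but only if you treat ``same skeleton and same v-structures $\Rightarrow$ connected by covered edge reversals'' as a purely combinatorial statement about DAGs (which it is) and prove it independently of any probabilistic notion; then the separate lemma that a single covered-edge reversal preserves all d-separations completes the argument. If instead you simply cite Chickering's equivalence theorem wholesale, you are assuming what you set out to prove. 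The original Verma--Pearl route avoids this by working directly with active paths and showing that any d-connection in one graph can be replicated in the other once skeleton and v-structures agree.
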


In other words, the Markov equivalence class denotes a set of DAGs that encode the same set of conditional independencies \cite{he2015counting}. It explains why constraint-based algorithms can differentiate distinct Markov equivalence classes but are incapable of distinguishing CPDAGs within the same equivalence class.

In the following, we briefly discuss the \textit{Oracle Version} of the PC and PC-stable algorithm and then concentrate on the \textit{sampling version}, when the true statistical independence information is unknown and needs to be inferred from the observational data.

\subsubsection{Oracle version}
The oracle version of the PC algorithm assumes that the exact conditional probability distribution is known. It has been thoroughly studied in \cite{spirtes2000causation} and also briefly reviewed by \cite{colombo2014order}. 

The PC algorithm and PC-stable algorithm are illustrated together in Algorithm 1. The latter is built upon the former with only slight modification, thus we show them all together rather than separately. The indicator variable \textit{stable} decides whether PC or PC-stable will be used. 

The PC algorithm starts with a complete (fully-connected) undirected graph and includes two phases. In the first phase, the algorithm queries conditional independencies from the oracle and deletes the edge between two nodes in the DAG if they are independent conditioned on the third (possibly empty) set of variables -- sometimes referred to as the separation set. The algorithm also keeps a record of all separation sets as it progresses. At the end of the first phase, the algorithm outputs a skeleton graph. The second phase utilizes the separation set information to identify $v$-structures on the skeleton, then uses further four rules (R1 to R4) to logically infer the edge directions.


We briefly explain the steps of Algorithm 1. The PC algorithm is used when \textit{stable} is \texttt{False}. Line 1 defines the \textit{stage} $l$, which increases from $0$ to $n-2$, denotes the cardinality of the separation set $\mathcal{K}$. In Lines 3-4, for each two adjacent nodes connecting two nodes, we find a set of nodes $adj(G,i,j) \cap path(G, i, j)$ which is an intersection of their neighbours and the nodes on all paths connecting them. This is the minimal set that contains a possible separation set of nodes to make the two adjacent nodes  independent or conditionally independent. At stage $l$, if a pair of nodes $i, j$ has such set $adj(G,i,j) \cap path(G, i, j)$ having a cardinality of less than $l$, then this pair can be skipped. Line 6 iterate all size-$l$ subsets $X_{\mathcal{K}}$ of $adj(G,i,j) \cap path(G, i, j)$. At line 7, if $X_i \CI X_j \lvert X_{\mathcal{K}}$, then this edge is deleted from the graph $G$ (line 9), and the separation set $\mathcal{K}$ is added into the entry $(i,j)$ and $(j,i)$ of matrix $S$. Line 21 denotes early termination criteria. Notice that in line 2, the algorithm proceeds to the next stage only when the flag variable \textit{continue} is \texttt{True}. This indicator variable is initially set as \texttt{False}. In line 3-4, if no pair of nodes $i,j$ in $G$ has $|adj(G,i,j) \cap path(G, i, j)|$ larger than $l$, line 5 will never be executed, and lines 21-23 would be executed and the algorithm terminates. The reason for early termination is that, if none of the nodes $i,j$ has $|adj(G,i,j) \cap path(G, i, j)| > l$, then none of them would have $|adj(G,i,j) \cap path(G, i, j)| > l+1$, and there is no need to proceed to the next stage.

If \textit{stable} is \texttt{True}, then PC-stable algorithm is used. In this case, line 9 is skipped, and instead, line 11 is executed, and also lines 18-20. The PC-stable algorithm does not delete an edge immediately, but keeps them into a set $D$ and performs deletion all at once in the end of each stage. As discussed in the next section, delaying the deletion of edges to the end of each stage produces order-independent results in the sample (non-oracle) version. 

In Phase II, lines 2-4 identify all unshielded triples $X_i - X_k - X_j$ in the skeleton. The logic is that if the unshielded triple is $X_i \rightarrow X_k \rightarrow X_j$, $X_i \leftarrow X_k \leftarrow X_j$ or $X_i \leftarrow X_k \rightarrow X_j$, the $X_k$ must appear in the entry $S(i,j)$ (or $S(j,i)$, notice that $S$ is a symmetric matrix). On the contrary, if $X_k$ does not exist in $S(i,j)$, the unshielded triple must form a $v$-structure, $X_i \rightarrow X_k \leftarrow X_j$. The four rules in lines 7-10 is to infer directions for as more edges as possible, referred to as \textit{orientation propagation} \cite{glymour2019review}. For example, the Rule 1 rules out the possibility of $X_i \leftarrow X_j$, because it will make $X_k$ a $v$-structure node which should already be found at line 3,  Rule 2 would create a cyclic graph if it's  $X_i \leftarrow X_j$; etc.

In the next section, we discuss the PC and PC-stable algorithms when the exact conditional probability distribution is unknown, which we refer to as the \textit{Sampling Version} of the algorithm.

\begin{algorithm}[thbp]
\caption{\textbf{1.} PC \& PC-stable Algorithm (Oracle Version)- Phase I}
\label{alg:PC-I}
\textbf{Input:} \\ 
\hspace*{\algorithmicindent} $G$: Fully-connected undirected graph with $n$ vertices \\
\hspace*{\algorithmicindent} $S$: $n \times n$ square matrix \\
\hspace*{\algorithmicindent} $stable$: \texttt{False}: PC Algorithm, \texttt{True}: PC-stable Algorithm \\
\hspace*{\algorithmicindent} $D$: Empty set \\

\textbf{Output:} \\
\hspace*{\algorithmicindent} $G_{skel}$: Estimated skeleton \\
\hspace*{\algorithmicindent} $S$: Separation set \\
\begin{algorithmic}[1]
\FOR{each $l$ from $0$ to $n-2$}
\STATE continue = \texttt{False}
\FOR{each pair of adjacent vertices $i,j$ in $G$}
    \IF{$adj(G,i,j) \cap path(G, i, j) \geq l$} 
        \STATE continue = \texttt{True}
        \FOR{$X_\mathcal{K} \subseteq adj(G,i,j) \cap path(G, i, j)$ with $|\mathcal{K}| = l$}
            \IF{$X_i \CI X_j \lvert X_{\mathcal{K}}$}
                \IF{\textit{stable} is \texttt{False}} 
                    \STATE Delete edge $X_i- X_j$ in $G$
                \ELSE
                    \STATE Add edge $X_i-X_j$ to $D$
                \ENDIF
            \STATE Add $\mathcal{K}$ into $S(i, j)$ and $S(j, i)$
            \ENDIF
        \ENDFOR
    \ENDIF
\ENDFOR
\IF{stable is \texttt{True}}
    \STATE Delete all edges in $D$ from the graph $G$ and reset $D$ to empty
\ENDIF
\IF {continue is \texttt{False}}
\STATE Break the for-loop and goto line 21
\ENDIF
\ENDFOR

\STATE $G_{skel} = G$
\end{algorithmic}
\end{algorithm}

\begin{algorithm}[thbp]
\caption{\textbf{1.} PC \& PC-stable Algorithm - Phase II}
\label{alg:PC-II}
\textbf{Input:} \\ 
\hspace*{\algorithmicindent} Estimated skeleton $G_{skel}$, separation set $S$. \\
\textbf{Output:} \\ 
\hspace*{\algorithmicindent} CPDAG $G$. \\

\begin{algorithmic}[1]

\FOR{All pairs of non-adjacent vertices $X_i, X_j$ with common neighbor $X_k$}
    \IF{$k \notin S(i,j)$}
    \STATE Replace $X_i-X_k-X_j$ in $G$ with $X_i \rightarrow X_k \leftarrow X_j$
    \ENDIF
\ENDFOR
\STATE{Repeatedly apply the following rules until no more edges can be oriented}
\STATE \textbf{R1:} Change $X_j-X_k$ to $X_j \rightarrow X_k$ if $X_i \rightarrow X_j$, $X_i$ and $X_k$ are non-adjacent, 
\STATE \textbf{R2:} Change $X_i-X_j$ to $X_i \rightarrow X_j$ if $X_i \rightarrow X_k \rightarrow X_j$
\STATE \textbf{R3:} Change $X_i-X_j$ to $X_i \rightarrow X_j$ if there are two chains $X_i -X_k \rightarrow X_j$ and $X_i - X_l \rightarrow X_j$ such and $X_k$ and $X_l$ are non-adjacent,
\STATE \textbf{R4:} Change $X_i-X_j$ to $X_i \rightarrow X_j$ if there are two chains $X_i -X_k \rightarrow X_l$ and $X_k \rightarrow X_l \rightarrow X_j$ such that $X_k$ and $X_l$ are non-adjacent,
\end{algorithmic}
\end{algorithm}

\subsubsection{Sampling Version}
For random variables following multi-variant Gaussian distribution $X_{\mathcal{V}} \sim  \mathcal{N}(\boldsymbol{\mu}, \Sigma)$, the conditional independence is equivalent to the corresponding partial correlation coefficient being equal to zero \cite{lauritzen1996graphical}. However, in the sampling version of the algorithm, partial correlation coefficients need to be estimated from the data. Due to the existence of estimation error, it may not be exactly zero even when two variables are conditionally independent. Therefore, line 7 of Algorithm 1 is replaced by a statistical conditional independence test

\begin{center}
    line 7: \textbf{if } $|F_z(r_{ij\cdot \mathcal{K}})| \leq \frac{\Phi^{-1}(1-\alpha/2)}{\sqrt{N - |\mathcal{K}| - 3} }$ \textbf{ then}
\end{center}
where we denote $\rho_{ij \cdot \mathcal{K}}$ as the true partial correlation coefficient according to the exact probability distribution, and $r_{ij \cdot \mathcal{K}}$ be an estimation from the sample data. The Fisher's $z$-transformation is applied on $r_{ij \cdot \mathcal{K}}$,
\begin{equation}
    F_z(r_{ij \cdot \mathcal{K}}) = \arctanh(r_{ij \cdot \mathcal{K}}) = \frac{1}{2}\ln{\frac{1+r_{ij \cdot \mathcal{K}}}{1-r_{ij \cdot \mathcal{K}}}}.
\end{equation}

It has been proved that $F_z(r_{ij \cdot \mathcal{K}})$ quickly approaches a Gaussian distribution $\mathcal{N}$ $(\frac{1}{2}\ln{\frac{1+\rho_{ij \cdot \mathcal{K}}}{1-\rho_{ij \cdot \mathcal{K}}}}, \frac{1}{N-|\mathcal{K}|-3})$ for any $\rho_{ij \cdot \mathcal{K}}$ as the sample size $N$ approaches infinity \cite{fisher1992statistical}. Therefore, we set the null hypothesis as $H_0(i, j \lvert \mathcal{K}): \rho_{ij \cdot \mathcal{K}} = \rho_0 = 0$ and the alternative hypothesis as $H_1(i, j \lvert \mathcal{K}): \rho_{ij \cdot \mathcal{K}} = \rho_1 \neq 0$, and $H_0$ is rejected with significant level $\alpha$ if $|F_z(r_{ij \cdot \mathcal{K}})| > \frac{\Phi^{-1}(1-\alpha/2)}{\sqrt{N-|\mathcal{K}| - 3}}$, where $\Phi$ is the cumulative distribution function of a standard normal distribution  \cite{fisher1924distribution}.

The order-dependence feature of the PC algorithm is mainly due to the possible erroneous existed in the statistical (conditional) independence tests when applied on real datasets. The PC and and PC-stable algorithm yield the same result if the true conditional independence information is given. However, on real-world observational data the exact conditional probability distribution is unknown, and estimate partial correlation coefficient from finite data samples may introduce bias. The error made at early stages in the PC algorithm - deleting an edge that should exist or keeping an edge that does not exist in the true causal graph - will be propagated to subsequent stages and incur more errors. This cascading effect in the PC algorithm will result in a varying result skeleton. 

The PC-stable algorithm has been proved to be an improvement for this order-dependent issue, by the following theorem:
 
\begin{theorem}[Colombo and Maathius, Theorem 3]
The skeleton of the PC-stable algorithm is order-independent when the conditional independence needs to be inferred from data. \cite{colombo2012modification}
\end{theorem}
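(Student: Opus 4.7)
The plan is to prove order-independence by induction on the stage index $l$ in Phase~I of Algorithm~1 with \textit{stable} set to \texttt{True}, showing that the graph $G_l$ obtained at the end of stage $l$ is uniquely determined by the data, independent of (i) the order in which pairs of adjacent vertices $(i,j)$ are processed within the stage and (ii) the order in which the size-$l$ conditioning subsets $X_{\mathcal{K}}$ are enumerated for each pair.

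\textbf{Base case ($l=0$).} The initial graph is fully connected, so the outer loop examines every unordered pair $\{i,j\}$. For each pair the only candidate conditioning set is $\emptyset$, and exactly one test $X_i \CI X_j \mid \emptyset$ is performed. Under PC-stable, no edge is deleted mid-stage; edges to be removed are accumulated in $D$ on line~11 and deleted only at lines~18--20. Consequently $D$ at the end of stage~$0$ is simply the set of pairs whose marginal independence test rejects, which is a deterministic function of the sample data alone. Hence $G_0$ is uniquely determined.

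\textbf{Inductive step.} Suppose $G_{l-1}$ is uniquely determined regardless of ordering. The critical observation is that the ``reference graph'' used to compute $adj(G,i,j)\cap path(G,i,j)$ in line~4 during stage $l$ is exactly $G_{l-1}$, because any edge flagged during stage $l$ remains in $G$ until the end-of-stage purge on line~19. Therefore the collection of CI tests actually attempted during stage $l$ coincides with
\[
\bigl\{\,(i,j,\mathcal{K}) : i,j \text{ adjacent in } G_{l-1},\; \mathcal{K}\subseteq adj(G_{l-1},i,j)\cap path(G_{l-1},i,j),\; |\mathcal{K}|=l\,\bigr\},
\]
and this collection depends only on $G_{l-1}$, which is fixed by hypothesis. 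Each individual test's outcome is a deterministic function of the sample, so the edge set
\[
D_l = \bigl\{\{i,j\} : \text{some } \mathcal{K} \text{ in the above collection rejects } H_0(i,j \mid \mathcal{K})\bigr\}
\]
is order-invariant, and $G_l = G_{l-1}\setminus D_l$ is uniquely determined. The early-termination predicate in lines~21--23 fires precisely when no pair in $G_{l-1}$ has $|adj(G_{l-1},i,j)\cap path(G_{l-1},i,j)|\geq l$, which is again a property of $G_{l-1}$ alone, so the stopping stage is also order-invariant. Closing the induction, the returned skeleton $G_{\mathrm{skel}}$ is order-independent.

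\textbf{Main obstacle.} The work is conceptual rather than computational: one must cleanly separate the ``snapshot'' role of $G$ (used for computing $adj$ and $path$ sets) from its ``working'' role (being modified by deletions). In the non-stable PC algorithm these roles coincide, and a deletion on line~9 can shrink $adj(G,i,k)$ for some later pair $(i,k)$ processed in the same stage, thereby changing which CI tests are even attempted and producing order-dependent behavior. The deferred-deletion trick of PC-stable decouples the two roles within a single stage, which is exactly what the induction leverages. A minor caveat worth flagging but not resolving is that the separation set matrix $S$ may still be order-dependent in PC-stable, since distinct witnessing sets $\mathcal{K}$ can overwrite one another across iterations; because the theorem's claim is restricted to the skeleton, this has no bearing on the conclusion.
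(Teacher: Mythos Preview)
The paper does not prove this theorem at all: it is quoted as Theorem~3 of Colombo and Maathuis \cite{colombo2012modification} and used only as background, with no proof given in the text or the appendix. So there is no ``paper's own proof'' to compare against.

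That said, your argument is the standard one and is correct. The induction on the stage index, with the key observation that under the \textit{stable} flag the adjacency and path sets consulted in lines~3--6 are computed from the graph $G_{l-1}$ produced at the end of the previous stage (since deletions are deferred to line~19), is exactly the mechanism Colombo and Maathuis exploit. Your closing remark that the separation-set matrix $S$ may still be order-dependent is also accurate and matches their own caveat; the theorem concerns only the skeleton, so this does not affect the claim.

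One small point of exposition: you assert that ``the collection of CI tests actually attempted during stage $l$'' equals the full set displayed. Strictly speaking, many implementations break out of the inner loop over $\mathcal{K}$ as soon as one separating set is found, so different orderings may perform different subsets of these tests. This does not harm your conclusion, because $D_l$ depends only on whether \emph{some} $\mathcal{K}$ in the displayed collection yields independence, and that existential condition is order-free. You may wish to phrase the inductive step in terms of this existential characterization of $D_l$ rather than in terms of the literal set of tests executed.
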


\subsection{Performance Bottleneck}
The PC and PC-stable algorithm are computationally intensive, in that the two algorithms need to perform an exhaustive search for the conditioned nodes which is of super-exponential complexity. The number of conditional independence tests is $\binom{n}{2} \cdot$ $\sum_{i=0}^k\binom{n-2}{i}$ in the worst case, with an upper bound of 
\begin{equation}\label{eq:pc_upper_bound}
    \frac{n(n-1)^{k+1}}{2}
\end{equation}
using the binomial theorem. The integer $n$ is the dimension (number of nodes) of the causal graph, and integer $k \in [0, n)$ is the maximum degree among all nodes. This time complexity makes the algorithm infeasible on large graphs, which promotes us to seek a speed-up solution.

Note that we did not include the time complexity of calculating $Path(G, i, j)$ in formula \ref{eq:pc_upper_bound}, the calculation of which needs the Depth-first search (DFS) or Breath-first search (BFS) algorithm for each pair of nodes $u, v$ where $v \in neighbour(i)$ and $u \in neighbour(j)$. The DFS or BFS requires $O(|\mathcal{V}| +|\mathcal{E}|)$ computational time. 

\section{Reverse Order Pruning PC Algorithm}
We propose a reverse order pruning PC algorithm proceeding in the order of decreasing conditional set size $l$. First, we define the \textit{$v$-structure nodes} of two nodes $X_i$ and $X_j$ to be $X_{\mathcal{W}_{ij}}$, where 
\begin{equation}
    \mathcal{W}_{ij} = \{k \in \mathcal{V} \text{ s.t. } X_i \rightarrow X_k \leftarrow X_j\}, \text{ for } i, j \in \mathcal{V},
\end{equation}
in other words, $X_{\mathcal{W}_{ij}}$ includes all nodes $X_{k}$ that has the exact structure $X_i \rightarrow X_k \leftarrow X_j$. Moreover, we introduce the following notations.

\begin{definition}
We use $de(X_i)$ to denote the descendants of a node $X_i, i \in \mathcal{V}$ in a DAG $G(\mathcal{V}, \mathcal{E})$. 
Furthermore, we define set $de(X_{\mathcal{K}})$ to be the union of subsequent descendants of all nodes in $X_{\mathcal{K}}$, 
\begin{equation}
   de(X_{\mathcal{K}}) \triangleq \cup_{i \in \mathcal{K}}de(X_i).
\end{equation}
\end{definition}

The following proposition serves as theoretical support for the speed-up of our new algorithm.

\begin{proposition}\label{prop:prop1}
Let $P(X_{\mathcal{V}})$ be the probability distribution that satisfies the causal Markov and faithfulness conditions with respect to the DAG $G(\mathcal{V}, \mathcal{E})$, and assume that all exact conditional independence information is known. If $X_i \CI X_j \vert X_{\mathcal{K}_{ij}^{\min}}$ where $X_{\mathcal{K}_{ij}^{\min}}$ is one of the minimal sets of variables rendering $X_i$ and $X_j$ independent (that is, a separation set of the smallest cardinality among all subsets of $X_{\mathcal{V} \backslash \{i, j\}}$), then $X_i \CI X_j \lvert X_{\mathcal{K}_{ij}^{\max}}$ where set $\mathcal{K}_{ij}^{\max} \subseteq \mathcal{V} \backslash \{i, j\}$ is one of the maximal sets rendering $X_i$ and $X_j$ independent (that is, a separating set of the largest cardinality among all subsets of $X_{\mathcal{V} \backslash \{i, j\}}$).

One way to construct $X_{\mathcal{K}_{ij}^{\max}}$ is to follow the below steps

\hfill

(Step 1) Let $\mathcal{K}_{ij}^{\max}$ include all variables in $\mathcal{K}_{ij}^{\min}$, that is,  $\mathcal{K}_{ij}^{\min} \subseteq \mathcal{K}_{ij}^{\max}$,

\hfill

(Step 2) Let $\mathcal{K}_{ij}^{\max}$ include as more as the other variables (excluding $i$ and $j$), satisfying the following two rules:

\hfill

(Rule 2-1) $X_{\mathcal{K}_{ij}^{\max}}$ does not include any variable of $X_{\mathcal{W}_{ij}}$ and $de(X_{\mathcal{W}_{ij}})$,

\hfill

(Rule 2-2) If $X_{\mathcal{K}_{ij}^{\max}}$ includes a collider $X_k$ on the path between $X_i$ and $X_j$, but not a $v$-structure node of $X_i$ and $X_j$ (a node in $X_{\mathcal{W}_{ij}}$), then for any directed path between $X_i, X_j$ that pass through $X_k$, $X_{\mathcal{K}_{ij}^{\max}}$ should include all the other non-collider variables on that path, exclude those in $X_{\mathcal{W}_{ij}}$ and $de(X_{\mathcal{W}_{ij}})$. For all such paths, if for at least one path there is no such non-collider variables, then $X_k$ cannot be included in $X_{\mathcal{K}_{ij}^{\max}}$.

\hfill

Both the set $X_{\mathcal{K}_{ij}^{\min}}$ and $X_{\mathcal{K}_{ij}^{\max}}$ may not be unique.
\end{proposition}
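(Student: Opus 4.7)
The plan is to reduce the statement to a purely graphical claim: under the causal Markov and faithfulness conditions, $X_i \CI X_j \mid X_{\mathcal{K}}$ in $P$ is equivalent to $X_i$ and $X_j$ being $d$-separated by $X_{\mathcal{K}}$ in $G$, so it suffices to show that the set $\mathcal{K}_{ij}^{\max}$ assembled in Steps 1--2 $d$-separates $X_i$ and $X_j$, and then to check that it has the largest possible cardinality. The starting point is that $\mathcal{K}_{ij}^{\min}$ already blocks every path between $X_i$ and $X_j$, so on each such path there is at least one blocker: either a non-collider middle node lying in $\mathcal{K}_{ij}^{\min}$, or a collider middle node whose node together with all of its descendants is disjoint from $\mathcal{K}_{ij}^{\min}$.

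The heart of the argument is a path-by-path case analysis. Fix an arbitrary path $p$ between $X_i$ and $X_j$ in $G$. If $p$ is blocked in $\mathcal{K}_{ij}^{\min}$ by a non-collider $X_m$, then since $\mathcal{K}_{ij}^{\min} \subseteq \mathcal{K}_{ij}^{\max}$ (Step 1), $X_m$ remains in $\mathcal{K}_{ij}^{\max}$ and $p$ stays blocked. If $p$ is the length-two $v$-structure $X_i \to X_k \leftarrow X_j$, then $k \in \mathcal{W}_{ij}$, and Rule 2-1 directly excludes $X_k$ and all of $de(X_k)$ from $\mathcal{K}_{ij}^{\max}$, keeping the collider inactive. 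The substantive case is that $p$ has length greater than two and its only blocker in $\mathcal{K}_{ij}^{\min}$ is an inactive collider $X_k$ with $k \notin \mathcal{W}_{ij}$; if no descendant of $X_k$ is pulled into $\mathcal{K}_{ij}^{\max}$ the collider stays inactive, and otherwise Rule 2-2 forces every non-collider on $p$ (outside $\mathcal{W}_{ij} \cup de(\mathcal{W}_{ij})$) to be in $\mathcal{K}_{ij}^{\max}$, supplying a non-collider blocker. The final clause of Rule 2-2 precisely rules out the pathological situation in which no such non-collider exists, by forbidding $X_k$ from being added at all.

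For the cardinality-maximality claim, I would argue contrapositively: any node $X_l \in \mathcal{V} \setminus (\{i,j\} \cup \mathcal{K}_{ij}^{\max})$ must either belong to $\mathcal{W}_{ij} \cup de(\mathcal{W}_{ij})$, in which case adding it activates the length-two $v$-structure and (by faithfulness) destroys the conditional independence, or it must fail the hypothesis of Rule 2-2 for some longer path $p$, meaning $p$ would be opened at a collider if $X_l$ were added. Combining these two cases shows that $\mathcal{K}_{ij}^{\max}$ cannot be strictly enlarged within $\mathcal{V} \setminus \{i,j\}$ while preserving separation, which under faithfulness yields the largest-cardinality claim.

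The main obstacle I anticipate is the bookkeeping in the non-$v$-structure collider case: a descendant $X_d$ of a collider $X_k$ on $p$ might slip into $\mathcal{K}_{ij}^{\max}$ through an unrelated route, for instance because $X_d \in \mathcal{K}_{ij}^{\min}$ already, or because $X_d$ was added as a non-collider blocker for a different path. I would need a small auxiliary lemma confirming that whenever any descendant of a collider on $p$ lies in $\mathcal{K}_{ij}^{\max}$, the Rule 2-2 conditions applied to the collider on $p$ closest to that descendant still guarantee a non-collider blocker on $p$. Once this invariant is pinned down, the remaining verification is essentially a mechanical traversal of the d-separation cases.
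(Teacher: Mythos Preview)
Your approach is essentially the same as the paper's: reduce to $d$-separation via Markov plus faithfulness, argue that Step~1 preserves the existing blockers, and then verify that the additions in Step~2 governed by Rules~2-1 and~2-2 never activate a collider on any $X_i$--$X_j$ path. The paper's proof is in fact considerably less detailed than yours---it justifies Rule~2-1 and Rule~2-2 by pointing to small figure examples and does not separately argue the cardinality-maximality claim or the descendant-bookkeeping issue you flag---so your proposal is, if anything, a more careful execution of the same idea rather than a different route.
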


The proof of the above statement is in Appendix \ref{appendix:prop:prop1}.

In the next section, we demonstrate the benefits of the reverse order pruning using a motivational example, then follows by our proposed algorithm and detailed analysis.

\subsection{A Motivational Example}
Consider an 4-variable DAG with a multi-variant Gaussian probability distribution, and assume all exact conditional probability information is known. The true underlying DAG is shown in Figure \ref{fig:graph}.

\begin{figure}[thbp]
\centering
\includegraphics[width=3cm]{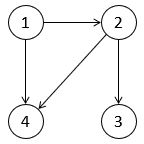}
\caption{The true DAG with 4 nodes.}
\label{fig:graph}
\end{figure}

The PC algorithm proceeds by increasing $l$ such that when it reaches $l=4$, and our algorithm proceeds in an reverse order. The Fig. \ref{fig:motive_example_PC} and Fig. \ref{fig:motive_example_PC_reverse} gives a full picture of the CI tests performed in the Phase I of the conventional PC algorithm and the proposed algorithm to learn the graph skeleton. The proposed algorithm performs 14 CI tests compared to the 17 in the conventional PC algorithm, which is 3 less.

\begin{figure}[thbp]
\centering
\includegraphics[width=0.5\textwidth]{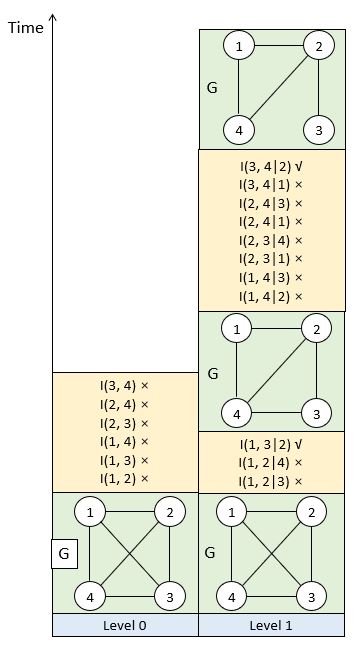}
\caption{An example of execution of PC algorithm. For better
readability, we use the term $i$ instead of $X_i$, and $I(i, j | k)$ to denote the CI test of $X_i, X_j$ conditioned on $X_k$.}
\label{fig:motive_example_PC}
\end{figure}

\begin{figure}[thbp]
\centering
\includegraphics[width=0.7\textwidth]{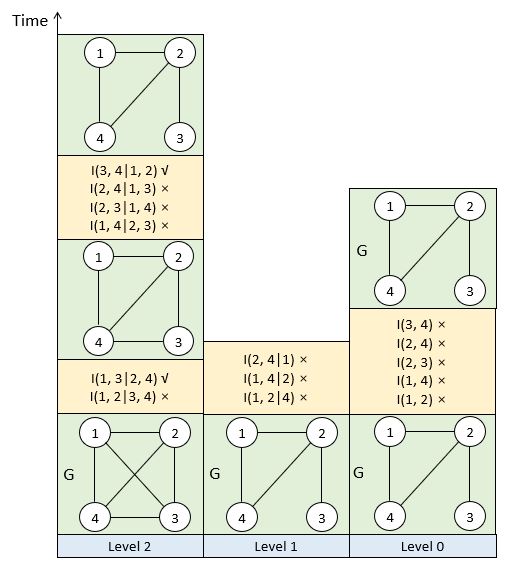}
\caption{An example of execution of PC-reverse algorithm. For better
readability, we use the term $i$ instead of $X_i$, and $I(i, j | k)$ to denote the CI test of $X_i, X_j$ conditioned on $X_k$.}
\label{fig:motive_example_PC_reverse}
\end{figure}

The Fig. \ref{fig:graph} embeds two conditional independence relations: $X_1 \CI X_3 \lvert X_2$, $X_3 \CI X_4 \lvert X_2$. A critical observation is that $X_1 \CI X_3 \lvert \{X_2, X_4\}$, $X_3 \CI X_4 \lvert \{X_1, X_2\}$, that is, if we incorporate some other variables in the conditional set, as long as this redundant variable does not open another path of information flow (collider or decsendents of collider), then the two originally independent variables still remains independent. 
The advantage of reverse order (stage) edge pruning is more significant when the graph is large. The intrinsic reason is, for finding the true conditional independence relation $X_i \CI X_j \lvert \{X_1, \ldots, X_k\}$, the conventional PC algorithm needs to traverse all the subsets (which is a power set) of $\{X_1, \ldots, X_k\}$ until reaches $X_{\mathcal{K}}=\{X_1, \ldots, X_k\}$ at the stage-$k$. None of the subset in the power set will not make $X_i \CI X_j$, and edge $X_i -X_j$ will not be deleted. On the contrary, our proposed algorithm can potentially finds all independence relations in only one stage running of the algorithm (see this example), by including all the other $n-2$ variables into the conditional set. The condition is that the 'redundant' variables does not include open another information flow path between $X_i$ and $X_j$. 

Another advantage early deletion is that, early deletion of edges make the graph sparser and reduces the number of CI tests in the future stages. In Fig. \ref{fig:motive_example_PC_reverse}, we see that even though the all independence relations has been found at stage-2, the CI tests in stage-1 and stage-0 are fewer than in Fig. \ref{fig:motive_example_PC}, because each node has less number of neighbors to test as the graph quickly becomes sparser in the beginning.

\subsection{Reverse Order Pruning PC Algorithm}
In this section, we formally present the proposed reverse order pruning PC algorithm, and the precise conditions under which the proposed algorithm is faster than the conventional PC.

\subsubsection{Find the Skeleton}
Algorithm 2 is the proposed algorithm (Phase I). The difference from the conventional PC algorithm (Algorithm 1) is the order of the stage $l$, and also the lines 21-23 of Algorithm 1. Conventional PC algorithm applies an additional early termination policy, the algorithm stops at stage $l$ when the DAG reaches a state where no pair of node has larger than $l$ neighbors. However, this will never happen in our proposed algorithm since $l$ is decreasing. Therefore, our proposed algorithm will always proceed to the end (finishes stage 0).

The following proposition states at which stage $l$ an edge would be deleted, which will later be useful for proving the main theorem.

\begin{proposition}\label{prop:whichstage}
Let the distribution of $X_{\mathcal{V}}$ be faithful to the DAG $G(\mathcal{V}, \mathcal{E})$, and assume the exact conditional probability distribution is given for all $X_i, X_j$ given $X_{\mathcal{K}} \subseteq X_{\mathcal{V}} \backslash \{X_i,X_j\}$. 
\begin{itemize}
    \item If $X_i$ and $X_j$ are not adjacent in the true DAG, the edge $X_i - X_j$ will be deleted when Algorithm 2 reaches stage $l = |\mathcal{K}_{ij}^{\max}|$.
    \item If $X_i$ and $X_j$ are adjacent in the true DAG, Algorithm 2 will keep testing the edge $X_i - X_j$ until finishes the final stage $l = 0$.
\end{itemize}
\end{proposition}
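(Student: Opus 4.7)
The plan is to treat the two bullets separately, each grounded in the equivalence between d-separation and conditional independence afforded by the faithfulness assumption together with the oracle. For the second bullet (adjacent $X_i, X_j$), the argument is short: a direct edge between $X_i$ and $X_j$ is a path with no intermediate node and cannot be blocked by any $X_{\mathcal{K}} \subseteq X_{\mathcal{V} \backslash \{i, j\}}$, so no set d-separates them; by faithfulness no such $\mathcal{K}$ makes them conditionally independent, and the oracle test at line 7 returns false for every candidate at every stage. Since the edge is never added to $D$ and thus is never removed, Algorithm 2 keeps treating $(i, j)$ as an adjacent pair throughout all stages down to $l = 0$ and continues testing it whenever the line-4 cardinality check is satisfied.

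For the first bullet I would split into two sub-claims: (a) no deletion occurs at any stage $l > |\mathcal{K}_{ij}^{\max}|$, and (b) deletion occurs at exactly stage $l = |\mathcal{K}_{ij}^{\max}|$. Sub-claim (a) is essentially definitional: since $|\mathcal{K}_{ij}^{\max}|$ is the largest cardinality of any separating set inside $X_{\mathcal{V} \backslash \{i, j\}}$, no subset of strictly larger size can d-separate $X_i$ and $X_j$; by faithfulness no oracle CI test at those larger $l$ returns true, so no deletion is triggered. A by-product I will also need is an invariant, provable by induction over the stages already completed, that the intermediate graph $G$ remains a supergraph of the true skeleton $G_{\text{true}}$ --- the oracle plus faithfulness ensures that any deletion performed by the algorithm is justified by a true d-separation and therefore removes only non-edges.

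Sub-claim (b) is the main obstacle. At stage $l = |\mathcal{K}_{ij}^{\max}|$ the algorithm restricts its search to subsets of $adj(G, i, j) \cap path(G, i, j)$ of size $l$, so we must show that at least one separating set of that size lies inside the restriction. The supergraph invariant immediately yields $path(G, i, j) \supseteq path(G_{\text{true}}, i, j)$ and $adj(G, i, j) \supseteq adj(G_{\text{true}}, i, j)$, and by the explicit construction in Proposition~1 (a minimal separating set enlarged only by non-collider nodes on directed paths between $X_i$ and $X_j$) one can argue that a maximum separating set lies entirely within $path(G_{\text{true}}, i, j)$. The harder containment is $\mathcal{K}_{ij}^{\max} \subseteq adj(G, i, j)$, since nothing in Proposition~1 forces each added non-collider to be a neighbor of $X_i$ or $X_j$. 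My plan for this last step is to refine the construction of Proposition~1 so that the added non-colliders can be chosen among the neighbors of the endpoints, by swapping far-away non-colliders for neighbor counterparts of the same role on their paths, or equivalently to read $\mathcal{K}_{ij}^{\max}$ throughout the statement as the maximum separating set lying within $adj(G_{\text{true}}, i, j) \cap path(G_{\text{true}}, i, j)$. Once this is secured, the inner loop at line 6 enumerates the separating set, line 7 returns true, the edge is deleted at line 9 (or queued at line 11 and deleted at the end of the stage), and the proposition follows.
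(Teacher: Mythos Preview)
Your high-level decomposition matches the paper exactly: the paper's proof is three sentences long, handling the non-adjacent case by invoking Proposition~\ref{prop:prop1} to obtain $\mathcal{K}_{ij}^{\max}$ and then asserting that ``since the algorithm iterates all possible $\mathcal{K}$ with its size decreasing,'' the edge is deleted at stage $l=|\mathcal{K}_{ij}^{\max}|$; the adjacent case is dispatched in one line just as you do. In particular, the paper does not decompose into your sub-claims (a) and (b), does not introduce the supergraph invariant, and does not address the restriction to $adj(G,i,j)\cap path(G,i,j)$ at all.

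The subtlety you flag in sub-claim~(b) is therefore a genuine gap \emph{in the paper's argument}, not merely in yours: the inner loop does not enumerate all size-$l$ subsets of $\mathcal{V}\setminus\{i,j\}$, so one really must argue that a maximum separating set sits inside the current search pool. Your attempted resolution, however, is not yet sound. First, the claim that Proposition~\ref{prop:prop1}'s construction forces $\mathcal{K}_{ij}^{\max}\subseteq path(G_{\text{true}},i,j)$ is incorrect as stated: Step~2 of that proposition adds \emph{as many variables as possible} subject only to Rules~2-1 and~2-2, so any node lying on no $i$--$j$ path (hence never a collider or $v$-structure node for that pair) is freely included. Second, your fallback of ``reading $\mathcal{K}_{ij}^{\max}$ as the maximum separating set within $adj(G_{\text{true}},i,j)\cap path(G_{\text{true}},i,j)$'' silently changes the quantity in the proposition and would ripple into Corollary~\ref{coro:timecomplexity} and Theorem~\ref{them:skeletonconsistency}, where $|\mathcal{K}_{ij}^{\max}|$ is used with its unrestricted meaning. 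If you want to close this rigorously, the cleaner route is to argue about the \emph{current} graph $G$ rather than $G_{\text{true}}$: at stage $l$ the edge $X_i-X_j$ is still present, and you would need to show that among all separating sets of size $|\mathcal{K}_{ij}^{\max}|$ at least one lies in $adj(G,i,j)\cap path(G,i,j)$ for the intermediate $G$, exploiting that $G$ is still quite dense when $l$ is large.
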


The proof is given in appendix \ref{appendix:prop:whichstage}. 

The speed escalation of the proposed algorithm compared to the conventional PC is as follows.

\begin{corollary}\label{coro:timecomplexity}
For any two nodes $X_i$ and $X_j$ that are not adjacent in the true DAG, suppose $\mathcal{K}_{ij}^{\min}$ is one of the minimal subset of $\mathcal{V}$ rendering $X_i,X_j$ independent, and $\mathcal{K}_{ij}^{\max}$ is one of the maximum set in Proposition \ref{prop:prop1}, then
\begin{equation}
    \frac{Q_{ij}(n,G)}{Q_{ij}^r(n,G)} = \frac{(n-1)^{|\mathcal{K}_{ij}^{\min}|}}{(n-1)^{n-2-|\mathcal{K}_{ij}^{\max}|}} = (n-1)^{|\mathcal{K}_{ij}^{\min}| + |\mathcal{K}_{ij}^{\max}|-n+2},
\end{equation}
where $Q_{ij}(n,G)$ is the number of CI queries for edge $X_i - X_j$ in PC algorithm, $Q_{ij}^r(n,G)$  is the number of CI queries for edge $X_i - X_j$ in reverse order pruning PC algorithm, and both are functions of the number of variables $n$ and the graph structure $G$.
\end{corollary}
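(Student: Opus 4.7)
The plan is to invoke Proposition~\ref{prop:whichstage} to pin down the stage at which the non-edge $X_i - X_j$ is actually removed in each algorithm, then count the CI queries performed for this edge up to and including that stage, and finally apply the binomial-theorem upper bound already used in equation~(\ref{eq:pc_upper_bound}) to express both counts in the $(n-1)^{\bullet}$ form appearing in the corollary.

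First, Proposition~\ref{prop:whichstage} tells us that in conventional PC the edge is removed exactly at stage $l=|\mathcal{K}_{ij}^{\min}|$, while in the reverse-order variant it is removed at stage $l=|\mathcal{K}_{ij}^{\max}|$. In the worst case, conventional PC must examine every subset of $adj(G,i,j)\cap path(G,i,j)$ of sizes $0,1,\ldots,|\mathcal{K}_{ij}^{\min}|$ (since by minimality no strictly smaller set separates $X_i$ from $X_j$, and in the worst case all same-size candidates at the deletion stage are tried before the separator is found); this gives the upper bound
\begin{equation*}
    Q_{ij}(n,G) \;\leq\; \sum_{s=0}^{|\mathcal{K}_{ij}^{\min}|}\binom{n-2}{s}.
\end{equation*}
Symmetrically, in the reverse-order pruning algorithm the edge survives every stage $l>|\mathcal{K}_{ij}^{\max}|$ (no supersets beyond the maximum separate $X_i,X_j$ by the construction in Proposition~\ref{prop:prop1}), and at worst all subsets of sizes $|\mathcal{K}_{ij}^{\max}|,\ldots,n-2$ are tested, yielding
\begin{equation*}
    Q^r_{ij}(n,G) \;\leq\; \sum_{s=|\mathcal{K}_{ij}^{\max}|}^{n-2}\binom{n-2}{s} \;=\; \sum_{t=0}^{n-2-|\mathcal{K}_{ij}^{\max}|}\binom{n-2}{t},
\end{equation*}
after the change of index $t=n-2-s$ using $\binom{n-2}{s}=\binom{n-2}{n-2-s}$.

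Next, I would reuse the same binomial-theorem manipulation that produced equation~(\ref{eq:pc_upper_bound}): for any $k\in[0,n-2]$,
\begin{equation*}
    \sum_{s=0}^{k}\binom{n-2}{s} \;\leq\; (n-1)^{k},
\end{equation*}
obtained from $(1+(n-2))^{k}$-type bounding already implicit in the paper's earlier derivation. Applying this with $k=|\mathcal{K}_{ij}^{\min}|$ to the PC count and with $k=n-2-|\mathcal{K}_{ij}^{\max}|$ to the reverse-order count produces the two numerators/denominators claimed. Dividing gives
\begin{equation*}
    \frac{Q_{ij}(n,G)}{Q^r_{ij}(n,G)} \;=\; \frac{(n-1)^{|\mathcal{K}_{ij}^{\min}|}}{(n-1)^{n-2-|\mathcal{K}_{ij}^{\max}|}} \;=\; (n-1)^{|\mathcal{K}_{ij}^{\min}|+|\mathcal{K}_{ij}^{\max}|-n+2},
\end{equation*}
which is exactly the stated identity.

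The main obstacle is conceptual rather than technical: the counts above are genuine upper bounds, not exact operation counts, because both algorithms prune their candidate sets to $adj(G,i,j)\cap path(G,i,j)$, which shrinks as edges are deleted, and because neither algorithm necessarily exhausts all subsets at the deletion stage before a separator is found. The corollary as written should therefore be read as an equality between the two worst-case upper bounds, inherited from the same bound used in equation~(\ref{eq:pc_upper_bound}); I would flag this explicitly in the proof so that the reader understands the ``='' in the statement is really a comparison of the matching worst-case estimates delivered by the binomial bound, and that any looseness in the bound affects the numerator and denominator in parallel and so cancels in the ratio.
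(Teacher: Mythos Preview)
Your proposal is correct and follows essentially the same approach as the paper: count CI queries as partial binomial sums up to the deletion stage in each algorithm, reflect the reverse-order sum via $\binom{n-2}{s}=\binom{n-2}{n-2-s}$, and replace both sums by the $(n-1)^{k}$ bound from equation~(\ref{eq:pc_upper_bound}) before taking the ratio. One minor point: Proposition~\ref{prop:whichstage} only concerns Algorithm~2, so it does not literally give the deletion stage for conventional PC; but your parenthetical argument from minimality of $\mathcal{K}_{ij}^{\min}$ already supplies that step, matching how the paper handles it, and your explicit remark that the ``='' compares matching worst-case estimates (the paper writes $\approx$) is a useful clarification.
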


The proof is in Appendix \ref{appendix:coro:timecomplexity}. Corollary \ref{coro:timecomplexity} indicates our algorithm has speed gain on testing an edge $X_i - X_j$ if $n-2-|\mathcal{K}_{ij}^{\max}| < |\mathcal{K}_{ij}^{\min}|$, that is, the number of nodes that when conditioned on will open a path between $X_i,X_j$ is fewer than the minimal set $\mathcal{K}_{ij}^{\min}$.

\begin{algorithm}[thbp]
\caption{\textbf{2.} Reverse order pruning PC (phase II omitted: same with Algorithm 1}
\label{alg:Fast PC-I}
\textbf{Input:} \\ 
\hspace*{\algorithmicindent} $G$: Fully-connected undirected graph with $n$ vertices \\
\hspace*{\algorithmicindent} $S$: $n \times n$ square matrix \\
\hspace*{\algorithmicindent} $stable$: \texttt{False}: Reverse order pruning PC algorithm, \texttt{True}: Reverse order pruning PC-stable algorithm \\
\hspace*{\algorithmicindent} $D$: Empty set \\

\textbf{Output:} \\
\hspace*{\algorithmicindent} $G_{skel}$: Estimated skeleton \\
\hspace*{\algorithmicindent} $S$: Separation set \\
\begin{algorithmic}[1]
\FOR{each $l$ from $n-2$ to $0$}
\FOR{each pair of adjacent vertices $i,j$ in $G$}
    \IF{$adj(G,i,j) \cap path(G, i, j) \geq l$} 
    \FOR{$X_\mathcal{K} \subseteq adj(G,i,j) \cap path(G, i, j)$ with $|\mathcal{K}| = l$}
        \IF{$X_i \CI X_j \lvert X_{\mathcal{K}}$}
            \IF{\textit{stable} is \texttt{False}} 
                \STATE Delete edge $X_i- X_j$ in $G$
            \ELSE
                \STATE Add edge $X_i-X_j$ to $D$
            \ENDIF
        \STATE Add $\mathcal{K}$ into $S(i, j)$ and $S(j, i)$
        \ENDIF
    \ENDFOR
    \ENDIF
\ENDFOR
\IF{stable}
    \STATE Delete all edges in $D$ from the graph $G$ and reset $D$ to empty
\ENDIF
\ENDFOR

\STATE $G_{skel} = G$
\end{algorithmic}
\end{algorithm}

Similar as the conventional PC algorithm, the sampling version of the proposed algorithm is to replace line 6 of Algorithm 2 with 
\begin{center}
    \textbf{if } $F_z(r_{ij\cdot \mathcal{K}}) \leq \frac{\Phi^{-1}(1-\alpha/2)}{\sqrt{N - |\mathcal{K}| - 3} }$ \textbf{ then}
\end{center}
when the exact conditional independence information is unknown.

\subsubsection{Infer the CPDAG}
This is the Phase II of the proposed algorithm and is the same with conventional PC algorithm, therefore, we omit it in Algorithm 2. 

The proposed algorithm differs from the conventional PC algorithm in the matrix $S$, since the entry of $i$-th row and $j$-th column of matrix $S$ now contains the separation set $\mathcal{K}_{ij}^{\max}$ rather than $\mathcal{K}_{ij}^{\min}$. We will prove the proposed algorithm and PC algorithm will produce the same CPDAG in Phase II even though they produce different matrices $S$ in phase I.

\begin{proposition}\label{prop:prop2}
For the oracle version, if the conventional PC algorithm and reverse order pruning PC algorithm generate the same skeleton after the first phase, and every entry in the matrix $S$ of PC algorithm and the corresponding entry in the matrix $S'$ of reverse order pruning PC algorithm satisfy the condition in Proposition \ref{prop:prop1}, then they will produce the same CPDAG in the second phase.
\end{proposition}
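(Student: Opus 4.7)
The plan is to reduce the claim to showing that v-structure identification in Phase~II produces the same oriented edges under both algorithms. Once the skeleton is common and the same v-structures have been oriented, the four Meek rules R1--R4 are purely syntactic rewrites that reference only the current partial orientation of the graph; they consult neither $S$ nor $S'$, so their fixed points must agree. Since the skeleton is identical by hypothesis, the entire argument collapses to proving that for every unshielded triple $X_i - X_k - X_j$ in the common skeleton,
\begin{equation*}
    k \in S(i,j) \iff k \in S'(i,j),
\end{equation*}
where by construction $S(i,j)$ is a minimal separator $\mathcal{K}_{ij}^{\min}$ and $S'(i,j)$ is the corresponding maximal separator $\mathcal{K}_{ij}^{\max}$ derived from $\mathcal{K}_{ij}^{\min}$ via the recipe in Proposition~\ref{prop:prop1}.

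The forward implication is immediate: Step~1 of the construction asserts $\mathcal{K}_{ij}^{\min} \subseteq \mathcal{K}_{ij}^{\max}$, so $k \in S(i,j)$ forces $k \in S'(i,j)$. For the reverse implication I would split on the orientation of the triple in the true DAG. If $(X_i, X_k, X_j)$ is a v-structure, then $k \in \mathcal{W}_{ij}$, and Rule~2-1 of Proposition~\ref{prop:prop1} expressly excludes every element of $\mathcal{W}_{ij}$ from $\mathcal{K}_{ij}^{\max}$, giving $k \notin S'(i,j)$. If instead the triple is a chain or a fork, then $k$ is a non-collider on an otherwise-active path between $X_i$ and $X_j$; by faithfulness every d-separating set for $(X_i, X_j)$ must block this path by including $k$, so in particular $k \in \mathcal{K}_{ij}^{\min} = S(i,j)$. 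Contraposing these two cases supplies the reverse implication.

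With the equivalence established, both algorithms orient exactly the same unshielded triples as v-structures in line~3 of Phase~II, and R1--R4 then produce identical CPDAGs from the identical intermediate PDAG. The subtlety I expect to need the most care with is the non-uniqueness of $\mathcal{K}_{ij}^{\min}$ and $\mathcal{K}_{ij}^{\max}$: different minimal separators will generally spawn different maximal separators, so the argument only goes through because the hypothesis pairs each entry of $S$ with the specific entry of $S'$ obtained from it via Proposition~\ref{prop:prop1}. Granted this entrywise pairing, the case analysis above does not require selecting any canonical representative, and the two CPDAGs coincide.
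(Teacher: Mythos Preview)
Your proposal is correct and takes essentially the same approach as the paper: both arguments reduce Phase~II equivalence to the v-structure identification step, observe that R1--R4 do not consult the separation sets, and then invoke the collider-exclusion property (Rule~2-1 of Proposition~\ref{prop:prop1}) to ensure that $k \notin S'(i,j)$ whenever $(X_i,X_k,X_j)$ is a v-structure. Your treatment is more careful than the paper's one-paragraph sketch in that you explicitly handle the non-collider case via the standard d-separation fact that an unshielded non-collider lies in \emph{every} separating set, whereas the paper leaves this direction implicit; this also means your argument does not actually depend on the entrywise pairing you flag as a subtlety, since the biconditional $k \in S(i,j) \iff k \in S'(i,j)$ holds for \emph{any} two separating sets of $X_i,X_j$ once $X_i - X_k - X_j$ is an unshielded triple.
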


The proof is rather simple once we understand how the information in set $S$ is used in the Phase II. The set $S$ is only useful for changing structure $X_i - X_k - X_j$ to $X_i \rightarrow X_k \leftarrow X_j$ in the skeleton -- that is, if $X_k$ is not in $S(i,j)$ then $X_k$ must be a collider in such a structure. Therefore, as long as the $(i, j)$-entry of matrix $S$ does not include any collider, Phase II of the algorithm would produce the exact same result. Indeed, this requirement is guaranteed by Proposition \ref{prop:prop1}.

\section{Algorithm Consistency on High Dimensional Dataset}
In this section, We will prove the resulting skeleton of the proposed algorithm is consistent on high dimensional data. 

\subsection{Find the Skeleton}
We use $n=n_N$ to represent the number of random variables as a function of the sample size $N$, and the $n_N$ random variables are indexed by $\mathcal{V} = \{1, \ldots, n_N\}$, thus $X_{\mathcal{V}}$. Assume $X_{\mathcal{V}}$ follows a multi-variant Gaussian distribution $P(X_{\mathcal{V}})$. Define $v_N$ as the smallest $\mathcal{K}_{ij}^{\max}$ among all the pairs of nodes $X_i,X_j$ that are not adjacent in the true DAG, 
\begin{equation*}
     v_N \triangleq \displaystyle{\min_{i,j} |\mathcal{K}_{ij}^{\max}|}, \; \forall i \neq j \in \text{ s.t. $X_i, X_j$ are non-adjacent in the true DAG},
\end{equation*}
where $\mathcal{K}_{ij}^{\max}$ is defined in Proposition \ref{prop:prop1}. The $v_N$ is determined solely by the DAG structure, and we allow it to be a function of the sample size $N$. Obviously, $v_N$ can not exceed $n-2$ in any DAG.

If the following four conditions are satisfied, then we have Theorem \ref{them:skeletonconsistency}.

\hfill

(C1) The underlying distribution $P(X_{\mathcal{V}})$ of the DAG $G(\mathcal{V}, \mathcal{E})$ is faithful to the graph,

\hfill

(C2) The number of random variables (nodes in the DAG) grows no faster than a sublinear function of $N$, $n_N = O(N^d)$, $0 < d < 1$,

\hfill

(C3) The absolute value of all partial correlations coefficients $\rho_{ij \cdot \mathcal{K}}$ are bounded by 
\begin{align*}
    & \inf \{|\rho_{ij \cdot \mathcal{K}}| \text{ for all } \rho_{ij \cdot \mathcal{K}} \neq 0\} \geq c_N = O(N^{-b}), 0 < b < (1-d)/2, \\
    & \sup \{|\rho_{ij \cdot \mathcal{K}}|\} \leq M < 1.
\end{align*}
where $d$ is defined in (C2),

\hfill

(C4) $v_N = O(N^e)$, $0<e<d$ where $d$ is defined in (C2). 

\begin{theorem}\label{them:skeletonconsistency}
Assume the conditions (C1)-(C4) are satisfied, and denote $\hat{G}_{skeleton}$ as the skeleton of the proposed algorithm's Phase I result, and $G_{skeleton}$ be the true skeleton of the graph, then there exists a significance level $\alpha \rightarrow 0$ such that
\begin{equation}
    P(\hat{G}_{skeleton} = G_{skeleton}) = 1 - O(e^{-CN^{1-2b}}) \rightarrow 1 \text{ as } N \rightarrow \infty,
\end{equation}
where $0 < C < \infty$ is some constant and $b$ is defined in (C3).
\end{theorem}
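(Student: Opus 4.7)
The plan is to reduce the event $\{\hat{G}_{\text{skel}}=G_{\text{skel}}\}$ to the event that every conditional-independence test performed by Algorithm~2 decides correctly, and then bound the complement by a union bound over all possible CI queries, following the template of Kalisch and B\"uhlmann's consistency analysis of the standard PC algorithm. The two ingredients are (i) a uniform tail bound on a single Fisher-$z$ test and (ii) a crude enumeration of the number of tests. The new element relative to the forward PC analysis is that the reverse-order algorithm may, in principle, examine conditioning sets of every size from $n-2$ down to $0$, so the single-test bound must be uniform in $|K|$.

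For step~(i) I would use the fact that $F_z(r_{ij\cdot \mathcal{K}})$ is, to leading order, $\mathcal{N}(F_z(\rho_{ij\cdot \mathcal{K}}),\,1/(N-|\mathcal{K}|-3))$, together with a non-asymptotic Gaussian-tail bound for the Fisher transformation. Choosing the significance level $\alpha=\alpha_N$ so that the cut-off $\Phi^{-1}(1-\alpha/2)/\sqrt{N-|\mathcal{K}|-3}$ lies strictly between $0$ and $\tfrac{1}{2}\log\tfrac{1+c_N}{1-c_N}$ — possible for all sufficiently large $N$ by (C3) — I would show that both the Type~I and Type~II errors of one test are at most $\exp\!\bigl(-C_0(N-|\mathcal{K}|-3)\,c_N^2\bigr)$. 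By (C2), $|\mathcal{K}|\le n-2=O(N^d)=o(N)$, so $N-|\mathcal{K}|-3=\Theta(N)$; by (C3), $c_N^2=\Theta(N^{-2b})$. Hence every single test errs with probability at most $O\!\bigl(\exp(-C_0 N^{1-2b})\bigr)$, uniformly in $(i,j,\mathcal{K})$.

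For step~(ii) the total number of CI queries the algorithm can ever perform is at most $\binom{n}{2}\sum_{l=0}^{n-2}\binom{n-2}{l}=\binom{n}{2}2^{n-2}$, because we may conservatively assume that at every stage every subset of the (full) other $n-2$ nodes is tested for every edge. A union bound then gives
$$
P\!\bigl(\text{some test errs}\bigr)\;\le\;\binom{n}{2}2^{n-2}\cdot O\!\bigl(\exp(-C_0 N^{1-2b})\bigr)\;=\;\exp\!\bigl(O(N^d)-C_0 N^{1-2b}\bigr).
$$
Since (C3) forces $b<(1-d)/2$, i.e.\ $d<1-2b$, the $O(N^d)$ exponent is dominated and the right-hand side becomes $O\!\bigl(\exp(-CN^{1-2b})\bigr)$ for some $0<C<C_0$. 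On the complementary event Proposition~2 ensures that every non-adjacent pair has its edge cut exactly at stage $l=|\mathcal{K}_{ij}^{\max}|$ — a stage that is reached because (C4) guarantees it lies within the range $[0,n-2]$ the loop visits — while faithfulness combined with the lower bound in (C3) forces $|\rho_{ij\cdot \mathcal{K}}|\ge c_N$ for every true edge and every $\mathcal{K}$, so no true edge is ever removed. This yields $P(\hat{G}_{\text{skel}}=G_{\text{skel}})\ge 1-O(\exp(-CN^{1-2b}))$ and concludes the proof.

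The main technical obstacle is the uniform single-test bound of step~(i): controlling the tails of $F_z(r_{ij\cdot \mathcal{K}})$ uniformly in $|\mathcal{K}|$ and $\rho_{ij\cdot \mathcal{K}}$ requires handling the bias of the Fisher transformation together with the fact that its Gaussian approximation is only asymptotic, and then verifying that the bound still behaves like $\exp(-C_0(N-|\mathcal{K}|-3)c_N^2)$ when $\rho$ is only $c_N\to 0$ away from the null. A secondary nuisance is the $2^{n-2}$ bound on the number of tests: it works only because (C3) is the strict inequality $b<(1-d)/2$. If one wanted to weaken (C3), the test count would have to be tightened using the fact that, under the good event, conditioning sets actually used are subsets of neighborhoods in a graph that shrinks toward $G_{\text{skel}}$.
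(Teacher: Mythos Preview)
Your proposal is correct and follows essentially the same route as the paper: reduce $\{\hat G_{\text{skel}}\neq G_{\text{skel}}\}$ to the union of single-test errors, bound each test error by $O(\exp(-C_0(N-|\mathcal K|)c_N^2))=O(\exp(-C_0 N^{1-2b}))$, multiply by a count of tests that is $\exp(O(n_N))=\exp(O(N^d))$, and use $d<1-2b$ from (C3) to absorb the count. Two minor differences are worth flagging. First, the paper separates the test count into $O(n_N^{v_N+2})$ for non-edges (this is where (C4) enters) and $O(2^{n_N-2})$ for true edges, then shows the latter dominates; your single crude bound $\binom{n}{2}2^{n-2}$ short-circuits this and renders (C4) superfluous for the union bound, which is arguably cleaner. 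Second, and more substantively, for step~(i) the paper does \emph{not} rely on the asymptotic normality of $F_z(r_{ij\cdot\mathcal K})$; instead it invokes Hotelling's exact density for the sample correlation to obtain a non-asymptotic tail bound $P(|r-\rho|>\gamma)\le C_1(M)(N-2)\exp((N-4)\log\tfrac{4-\gamma^2}{4+\gamma^2})$, transfers it to partial correlations by the degrees-of-freedom shift $N\mapsto N-|\mathcal K|$, and then to $F_z$ via a first-order expansion. This is exactly the technical obstacle you identify, and the Hotelling route is how it is resolved; your Gaussian-tail sketch would need to be replaced by (or justified through) such a non-asymptotic bound to be complete.
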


The proof is given in appendix \ref{appendix:them:skeletonconsistency} which takes the same spirit as Theorem 1 in \cite{kalisch2007estimating} with some differences, due to the assumption difference. 

The condition (C1) is a basic faithfulness condition that is assumed by most structure-learning algorithms such as PC algorithm. The condition (C2) is to ensure the sample size is significantly larger than the number of random variables, $N \gg n$. In condition (C3), the restriction on $c_N$ is quite slight since as $N \rightarrow 0$, $c_N$ approaches zero. The upper bound $M$ is to ensure the partial correlation coefficient is not too large -- that is, 1.0. Notice $M$ can be any constant between $c_N$ and 1, and we do not require it to be a function of $N$.  The $v_N$ in condition (C4), based on its definition, is also the last stage that our proposed algorithm would delete an edge (in the oracle version), since all non-existing edges in the true DAG would be successfully deleted during stages $l = n-2$ to $v_N$. The condition (C4) assumes $v_N$ also grows slower than a linear function of $N$, thus a sublinear function. Notice that the maximum of $v_N$ should not exceed the total number of variables $n$, which is guaranteed by $e < d$. 

\textbf{Comparison with Theorem 1 of \cite{kalisch2007estimating}} The condition (C1) and (C3) are similar to the conditions (A1) and (A4), respectively, in \cite{kalisch2007estimating}. The conditions (A2) and (A3) in \cite{kalisch2007estimating} are not required, they are

\hfill

(A2) The dimension $n_N = O(N^a)$ for some $0 \leq a < \infty$,

\hfill

(A3) The maximal number of neighbors in the DAG $G$ is denoted by $q_N = \max_{1 \leq j \leq n_N}|adj(G, j)|$, with $q_N = O(N^{1-b})$ for some $0 < b \leq 1$.

\hfill

Essentially, (A2) is replaced by (C2) in our algorithm, which assumes the dimension of the graph should grow no faster than a linear function of sample size $N$. (A3) is the sparsity requirement which restricts the maximal number of neighbors a node can have. It is not required for the algorithm. Instead, we assume (C4), which states that the minimum $\mathcal{K}_{ij}^{\max}$ should grow slower than the dimension $n_N$.

\subsection{Infer CPDAG}
As it has been proved by Theorem \ref{them:skeletonconsistency}, the estimated skeleton is the same with the true skeleton under conditions (C1)-(C4). Together with Proposition \ref{prop:prop2}, we can prove the Phase II will produce the true CPDAG. This is because the second phase does not include any statistical tests but only uses the separation matrix $S$ to infer the edge direction on the skeleton, which is deterministic. Therefore, we can prove the following theorem.

\begin{theorem}\label{them:cpdagconsistency}
Assume the conditions (C1)-(C4) are satisfied, and denote $\widehat{G}_{CPDAG}$ is the CPDAG of the proposed algorithm Phase II's result, and $G_{CPDAG}$ be the true CPDAG of the graph produced by the oracle version of the proposed algorithm, then there exists a significance level $\alpha \rightarrow 0$ such that
\begin{equation}
    P(\widehat{G}_{CPDAG} = G_{CPDAG}) = 1 - O(e^{-CN^{1-2b}}) \rightarrow 1 \text{ as } N \rightarrow \infty,
\end{equation}
where $0 < C < \infty$ is some constant and $b$ is defined in (C3).
\end{theorem}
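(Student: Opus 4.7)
The plan is to combine Theorem \ref{them:skeletonconsistency} with Proposition \ref{prop:prop2}, exploiting the fact that Phase II of the proposed algorithm is a purely deterministic function of the Phase I outputs (the skeleton $\hat{G}_{skel}$ and the separation matrix $\hat{S}$). Consequently, $\widehat{G}_{CPDAG}$ can differ from $G_{CPDAG}$ only if either (a) the sampling skeleton disagrees with the oracle skeleton, or (b) some entry of $\hat{S}$ at a non-adjacent pair violates the construction in Proposition \ref{prop:prop1}, breaking the hypothesis of Proposition \ref{prop:prop2}.

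First, I would condition on the event $A_N = \{\hat{G}_{skeleton} = G_{skeleton}\}$, whose probability Theorem \ref{them:skeletonconsistency} already bounds by $1 - O(e^{-CN^{1-2b}})$ under (C1)--(C4). Next, I would argue that on a refinement of $A_N$ still enjoying the same asymptotic probability, the matrix $\hat{S}$ automatically satisfies the requirement of Proposition \ref{prop:prop2}. The proof of Theorem \ref{them:skeletonconsistency} operates via a union bound over every Fisher $z$ CI test the sampling algorithm performs; on the intersection event where all of these tests agree with the corresponding oracle outcomes, the sampling algorithm is step-for-step indistinguishable from the oracle algorithm, and in particular writes into $\hat{S}(i,j)$ precisely the set that the oracle version would. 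By Proposition \ref{prop:whichstage}, for any non-adjacent pair $(i,j)$ this entry is a $\mathcal{K}_{ij}^{\max}$ of the form constructed in Proposition \ref{prop:prop1}, containing no element of $\mathcal{W}_{ij}$. Proposition \ref{prop:prop2} then immediately yields $\widehat{G}_{CPDAG} = G_{CPDAG}$ on this event.

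Combining the two steps, the probability that the two CPDAGs disagree is dominated by the probability of any CI-test failure, which is already the quantity bounded in Theorem \ref{them:skeletonconsistency}; the same $1 - O(e^{-CN^{1-2b}})$ rate therefore transfers verbatim to the CPDAG statement. No fresh probabilistic estimate is needed because Phase II is deterministic.

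The main obstacle is the second step, where Theorem \ref{them:skeletonconsistency} as stated only commits to skeleton correctness. To close the gap rigorously, I would either (i) revisit the union bound inside the proof of Theorem \ref{them:skeletonconsistency} and observe that it is already uniform over every triple $(i,j,\mathcal{K})$ the algorithm could query, so the stronger intermediate claim ``$\hat{S}$ equals its oracle counterpart'' is already implicitly established; or (ii) give an independent argument showing that, whenever the skeleton is correct, the separating set recorded for each deleted edge is necessarily a valid $\mathcal{K}_{ij}^{\max}$. Route (i) is preferable because it avoids duplicating probabilistic machinery: one only has to re-extract the hidden conclusion of the existing proof and then feed it into Proposition \ref{prop:prop2}, which finishes the argument by a deterministic appeal.
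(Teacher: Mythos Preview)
Your proposal is correct and follows essentially the same approach as the paper: invoke Theorem \ref{them:skeletonconsistency} for the skeleton, note that Phase II is deterministic, and use Proposition \ref{prop:prop2} to conclude. The paper's proof is in fact terser than yours---it asserts in a single sentence that ``the correctness of each $\mathcal{K}'$ is ensured by the correctness of Phase I'' and then writes $P(\text{Phase II correct}\mid \hat G_{skeleton}=G_{skeleton})=1$ without further argument; your identification of this as the main obstacle, and your preferred route (i) of re-extracting from the union bound in the proof of Theorem \ref{them:skeletonconsistency} the stronger event that every CI test matches its oracle outcome, is precisely the rigorous justification the paper leaves implicit.
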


The proof is given in appendix \ref{appendix:them:cpdagconsistency}. 

\subsection{Statistical Power Analysis}
In this section, we analyze the statistical power of the partial conditional independence test. Before we dive into the details, we think it is necessary to re-illustrate that the ``adding more variables into $\mathcal{K}$ is not blindly performed. Specifically,
\begin{itemize}
\item  If $\rho_{ij \cdot \mathcal{K}}=0$, then $\rho_{ij \cdot \mathcal{K}'}=0, \mathcal{K}' \subset \mathcal{K}$ as long as $\mathcal{K}' \backslash \mathcal{K}$ does not include any collider or its descendants. To put it formally,  $\mathcal{K}'$ can be any subset of $\mathcal{K}_{ij}^{max}$ in Proposition 5. 
\item If $\rho_{ij \cdot \mathcal{K}} \neq 0$, then $\rho_{ij \cdot \mathcal{K}'} \neq 0, \mathcal{K}' \subset \mathcal{K}$ as long as $\mathcal{K}'$ does not block all paths of information flow between $X_i$ and $X_j$.
\end{itemize}
The keynote is we cannot add variables to $K$ such any of them changes $\rho_{ij \cdot \mathcal{K}}$ from 0 to nonzero or vice versa. 

Since the true correlation coefficient $\rho_{ij \cdot \mathcal{K}} \neq 0$ is never observed, and we use $r_{ij\cdot \mathcal{K}}$ to estimate it, and CI test is to compare $F_z(r_{ij\cdot \mathcal{K}}) \leq \frac{\Phi^{-1}(1-\alpha/2)}{\sqrt{N - |\mathcal{K}| - 3} }$. Essentially, our analysis reduces to analyze if $\mathcal{K}$ becomes $\mathcal{K}' (\mathcal{K} \subset  \mathcal{K}')$, whether the above CI test (inequality relationship) will be affected or not, which directly related to the statistical power of the test (see the top figure in Fig. \ref{fig:powerAnalysis}). We discuss the following two cases separately,

\begin{equation}
F_z(r_{ij \cdot \mathcal{K}}) =
\begin{cases}
  \mathcal{N}(0, \frac{1}{N-|\mathcal{K}|-3}), & \text{if}\ \rho_{ij \cdot \mathcal{K}}=0 \\
  \mathcal{N}(\frac{1}{2}\ln{\frac{1+\rho_{ij \cdot \mathcal{K}}}{1-\rho_{ij \cdot \mathcal{K}}}}, \frac{1}{N-|\mathcal{K}|-3}), & \text{if}\ \rho_{ij \cdot \mathcal{K}} \neq 0 
\end{cases}
\end{equation}

\subsubsection{Case \texorpdfstring{$\rho_{ij \cdot \mathcal{K}}=0$}{lg}} 
In this case,  $X_i \CI X_j \lvert X_{\mathcal{K}}$, we will check when performing statistical independence tests whether $X_i$ and $X_j$ are still independent given $X_{\mathcal{K}'}$, and whether it will make any differences on the statistical power.

Since the mean of the distribution is the same for two cases, the two distributions $F_z(r_{ij \cdot \mathcal{K}})$ and $F_z(r_{ij \cdot \mathcal{K}'})$ are shown as the blue solid and blue dashed line in Fig. \ref{fig:powerAnalysis} bottom left.  Under the assumption that the sample size $N$ is of a higher order of the number of variables $n$, we have
\begin{equation}
    |\mathcal{K}| \leq n_N  = O(N^d), 0 < d < 1.
\end{equation}
The dominant term in the denominator is $N$. Therefore, changing $\mathcal{K}$ has asymptotically no influence on the distribution variance and the threshold. This shows in the figure as two very close distributions (blue solid vs. blue dashed). We conclude if $X_i$ and $X_j$ are independent given  $X_{\mathcal{K}}$, adding more variables into $X_{\mathcal{K}}$ does not affect the statistical power.

\begin{figure}[tb]
\centering
\includegraphics[width=12cm]{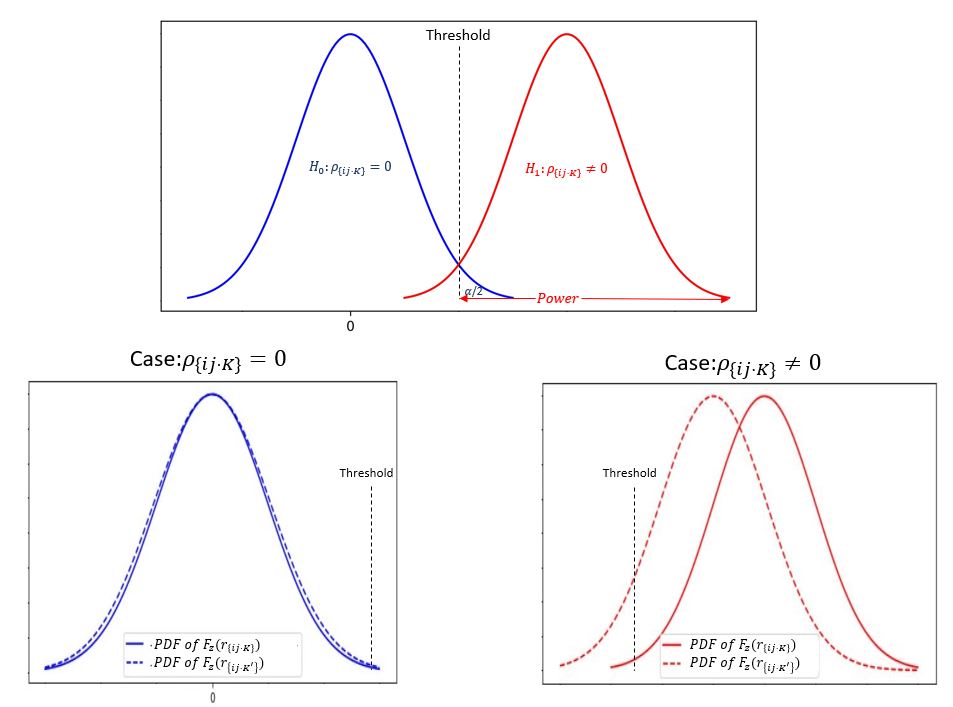}
\caption{The illustration of statistical power analysis when performing statistical tests.}
\label{fig:powerAnalysis}
\end{figure}

\subsubsection{Case \texorpdfstring{$\rho_{ij \cdot \mathcal{K}} \neq 0$}{lg}} 
In this case,  $X_i \nCI X_j \lvert X_{\mathcal{K}}$, we will check if $X_i \CI X_j$ are still independent given $X_{\mathcal{K}'}$, and analyze the statistical power difference of the two CI tests. 

For the same reason, the distribution variance and threshold are asymptotically not influenced. However, increasing set $\mathcal{K}$ has a noticeable influence on the mean of the distribution, which will cause the distribution to be shifted (to left), Fig. \ref{fig:powerAnalysis} bottom right. In Section 6, we will perform simulation experiment to analyze the loss of statistical power in this case.

\section{Parallelization and GPU Acceleration}
We propose a parallellization strategy for the proposed reverse order pruning PC algorithm. The Pearson correlation coefficient between real random variables can be computed from the correlation matrix, and parallellization can be realized by leveraging tensor operations. 

For $n$ random variables, we denote the $n$-by-$n$ correlation matrix as $corr = \{\rho_{ij}\}$, where the entry at the $i$-th row and $j$-th column denotes the correlation coefficient of variable $X_i$ and $X_j$. We denote the $b$ partial correlations to be computed parallelly as coefficient of $X_{i_p}, X_{j_p}$ given $X_{\mathcal{K}_p}$ where $\mathcal{K}_p = \{k_{p1}, \ldots, k_{pl}\}$, for $p = 1, \ldots, b$. 

The three dimensional tensor $\mathbf{H}^{b\times 2 \times 2}$ can be derived from the correlation matrix $corr$ and $B$, 
\begin{equation}
    \mathbf{H}^{b\times 2 \times 2} = \mathbf{H}_0^{b\times 2 \times 2} - \mathbf{H}_1^{b\times 2 \times l} (\mathbf{H}_2^{b\times l \times l})^{-1} (\mathbf{H}_1^{b\times 2\times l})^T,
\end{equation}
where
\begin{align}
    & \mathbf{H}_0^{b\times 2 \times 2}(p,:,:) = \begin{bmatrix}
    corr(i_p, i_p), & corr(i_p, j_p) \\
    corr(j_p, i_p),, & corr(j_p, j_p)
    \end{bmatrix},  p=1,\ldots, b,
    \\
    & \mathbf{H}_1^{b\times 2 \times l}(p,:,:) = \begin{bmatrix}
    corr(i_p, k_{p1}), & \ldots, &  corr(i_p, k_{pl}) \\
    corr(j_p, k_{p1}), & \ldots, &  corr(j_p, k_{pl})
    \end{bmatrix},  p=1,\ldots, b,
    \\
    & \mathbf{H}_2^{b\times l \times l}(p,:,:) =\begin{bmatrix}
    corr(k_{p1}, k_{p1}),  & \ldots, &  corr(k_{p1}, k_{pl}) \\
    \vdots & \ddots &  \vdots \\
    corr(k_{pl}, k_{p1}), & \ldots, &  corr(k_{pl}, k_{pl})
    \end{bmatrix},  p=1,\ldots, b.
\end{align}

We denote the matrix $\mathbf{H}$ as 
\begin{equation}
    \mathbf{H}^{b\times 2 \times 2}(p,:,:) = \begin{bmatrix}
    h_{p,11}, & h_{p,12} \\
    h_{p,21}, & h_{p,22}
    \end{bmatrix}, p=1,\ldots, b.
\end{equation}
The partial correlation coefficient can be calculated by scaling $\mathbf{H}$'s diagonal entries to 1 on each $p$-slice, and the off-diagonal element is equal to the partial correlation coefficient.  The $p$-th partial correlation coefficient can be read off from the $p$-th entry of vector $\boldsymbol{\rho}$,
\begin{equation}
    \boldsymbol{\rho}(p) = \frac{h_{p, 12}}{\sqrt{h_{p,11}h_{p,22}}}, p=1,\ldots, b.
\end{equation}
An illustration of the above operations is in Figure \ref{fig:tensor}. 

\begin{figure}[tb]
\centering
\includegraphics[width=10cm]{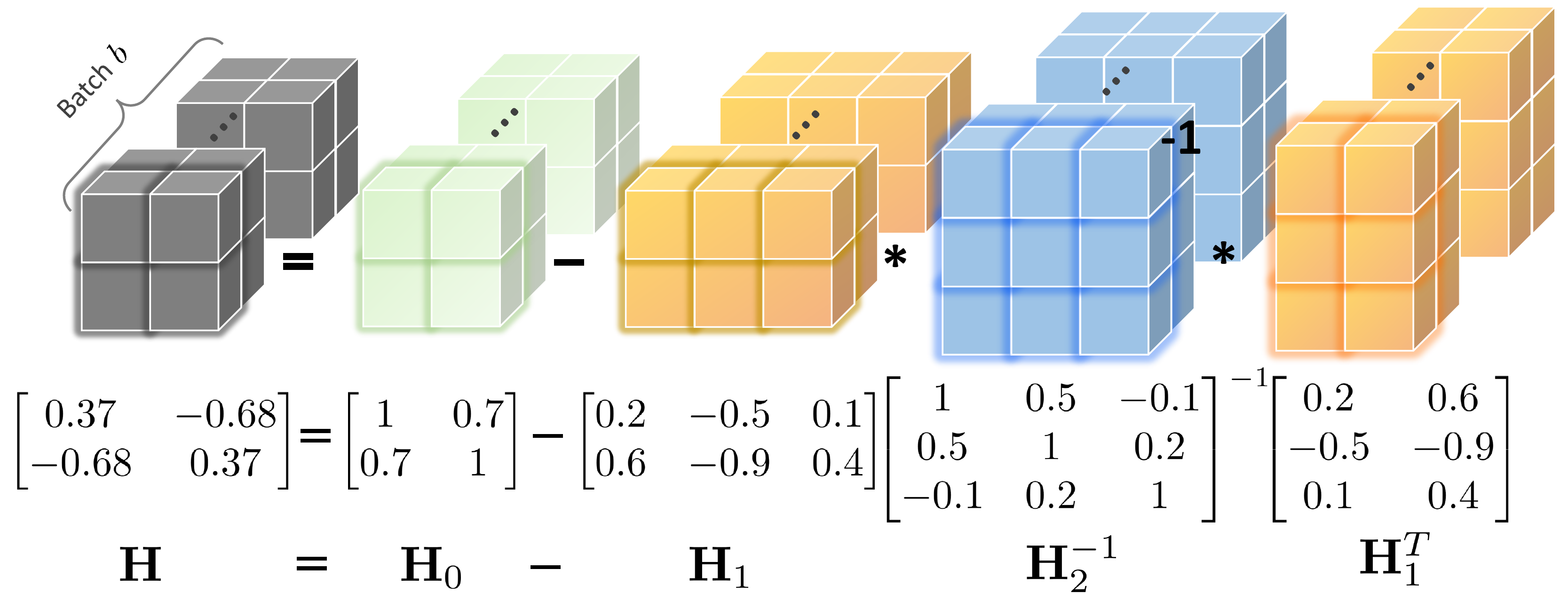}
\caption{An example of tensor computation for correlation coefficients.}
\label{fig:tensor}
\end{figure}


In implementation, we use the multidimensional indexing using multidimensional index arrays trick, and the $\mathbf{H}_0, \mathbf{H}_1, \mathbf{H}_2$ can each be constructed using one line instruction, instead of using the cumbersome for-loop construction. Specifically, we stack all the ($i$, $j$, $\mathcal{K}$) associated with the $b$ partial correlation coefficients into a matrix $B^{b \times (2+l)}$,
\begin{equation}
    B^{b\times (2+l)} = \begin{bmatrix}
    i_1, j_1, k_{11}, \ldots, k_{1l} \\
    \vdots \\
    i_p, j_p, k_{p1}, \ldots, k_{pl} \\
    \vdots \\
    i_b, j_b, k_{b1}, \ldots, k_{bl}
    \end{bmatrix}, p = 1, \ldots, b
\end{equation}
and use $B$ as an indexing matrix into the matrix $corr$ to construct $\mathbf{H}_0, \mathbf{H}_1, \mathbf{H}_2$.

The parallel reverse order pruning PC algorithm is in Algorithm 3. A critical difference with Algorithm 2 is the lines 9-12 and 17-19. In line 9, instead of computing partial correlation coefficients serially, we store the associated index into a matrix $B$ with pre-allocated size $b\times (2+l)$. When $B$ is fulfilled, the algorithm use $B$ as indices into $corr$ to form tensors and perform batch-wise CI tests. The information of which edge should be deleted is saved as well as the conditional set information. Then the graph $G$ is updated as well as the separation set $S$. Line 11 resets the batch $B$ and $m$ to prepare for the next iteration. At the last stage of the entire algorithm, the number of tests may not be enough to form a complete batch, therefore lines 9-11 are not executed, instead a semi-batch CI-tests is performed in lines 17-19. The above parallellization can be further accelerated by GPU.

\begin{algorithm}[thbp]
\caption{\textbf{Parallelization Version for Algorithm 2}}
\label{alg:A}
\textbf{Input:} \\ 
\hspace*{\algorithmicindent} $G$: Fully-connected undirected graph with $n$  vertices \\
\hspace*{\algorithmicindent} $S$: $n \times n$ square matrix \\

\textbf{Output:} \\
\hspace*{\algorithmicindent} $G_{skel}$: Estimated skeleton \\
\hspace*{\algorithmicindent} $S$: Separation set \\

\begin{algorithmic}[1]
\FOR{each $l$ from $n-2$ to $0$}
\STATE \textbf{Initialize}: Batch size $b$, and matrix $B$ with dimension $b \times (2 + l)$ \\
\STATE Set $m = 0$
    \FOR{each pair of adjacent vertices $i,j$ in $G$}
        \IF{$adj(G,i,j) \cap path(G, i, j) \geq l$} 
        \FOR{$X_\mathcal{K} \subseteq adj(G,i,j) \cap path(G, i, j)$ with $|\mathcal{K}| = l$}
        \STATE $m = m + 1$
        \STATE Add $[i, j, \mathcal{K}]$ to $B[m, :]$
            \IF{m==b}
            \STATE Parallel Computation of correlation coefficients: Use batch $B$ and $corr$ to construct matrix $\mathbf{H}_0, \mathbf{H}_1$ and $\mathbf{H}_2$;
            \STATE Parallel CI-tests: Compare the estimated partial correlation coefficients $\boldsymbol{\rho}$ with the chosen threshold, and save the edges to be deleted;
            \STATE Delete these edges from the graph $G$.
            \STATE Reset $m = 0$ and $B$ to empty;
            \ENDIF
        \ENDFOR
        \ENDIF
    \ENDFOR
\ENDFOR

\IF{m != 0}
\STATE Execute line 10-13 once.
\ENDIF
\end{algorithmic}
\end{algorithm}


\section{Experiments}
In this section, we perform simulations to compare the efficiency, accuracy, and statistical power of the proposed algorithm with the PC algorithm (Algorithm 1), and we denote the algorithms as:
\begin{itemize}
    \item PC: Algorithm 1
    \item PC-reverse: Algorithm 2
    \item PC-reverse-parallel: Algorithm 2 + Parallelization 
    \item PC-reverse-parallel-gpu:  Algorithm 2 + Parallelization + GPU
\end{itemize}

\subsection{Data Generation}
We simulate the data to evaluate the algorithm's performance. First, we fix an order of a number of $n$ variables, and we draw data for the first random variable $X_1$ from a standard normal distribution 
\begin{equation*}
    X_1 = \epsilon_1 \sim \mathcal{N}(0,1).
\end{equation*}
The data for the other variables are generated using the following model
\begin{align*}
    & \epsilon_i \sim \mathcal{N}(0,1), \\
    & X_j = \sum_{i=1}^{j-1}s_{ij} \cdot a_{ij}X_i + \epsilon_j, j = 2,\ldots, n,
\end{align*}
where all $s_{ij}$ are random variables with Bernoulli distribution, $s_{ij} \sim Bernoulli(d)$ with $0 \leq d \leq 1$. The parameter $d$ decides the density of the graph.  $a_{ij}$ is a random variable uniformly distributed in the domain $[-0.8, -0.2] \cup [0.2, 0.8]$, that is, $c_N = 0.2, M = 0.8$, and all $\epsilon_i$'s are independent. If $s_{ij}=0$, there is no edge $X_i \rightarrow X_j$ in the true DAG.

We simulate Bayesian causal graphs with different sizes to get a benchmark of the proposed algorithm's speed compared to the conventional PC algorithm. In the true causal graph, we set the average number of neighbours to be 5. We also randomly choose two nodes to be the hub nodes and each has 10 neighbours. The significance level in all experiments are set to be $10^{-3}$. We simulate a variety of different size graphs with size $n$ varying from 15 to 100, increasing by 5. For each graph size, we generate 100 Monte Carlo samples graphs and each algorithm is run on all of them, and the final performance result is the average over 100 Monte Carlo samples.

\subsection{Running Time}
The result in Figure \ref{fig:runningtime} compares the running time of different algorithms. For the 95-node graph, the running time of PC, PC-reverse, PC-reverse-parallel, PC-reverse-parallel-gpu are 4237 seconds, 723 seconds, 89 seconds, 5.13 seconds, respectively. The later three have  6, 47, and 825 fold speed up compared to the PC algorithm. 

\begin{figure}[tb]
\centering
\includegraphics[width=12cm]{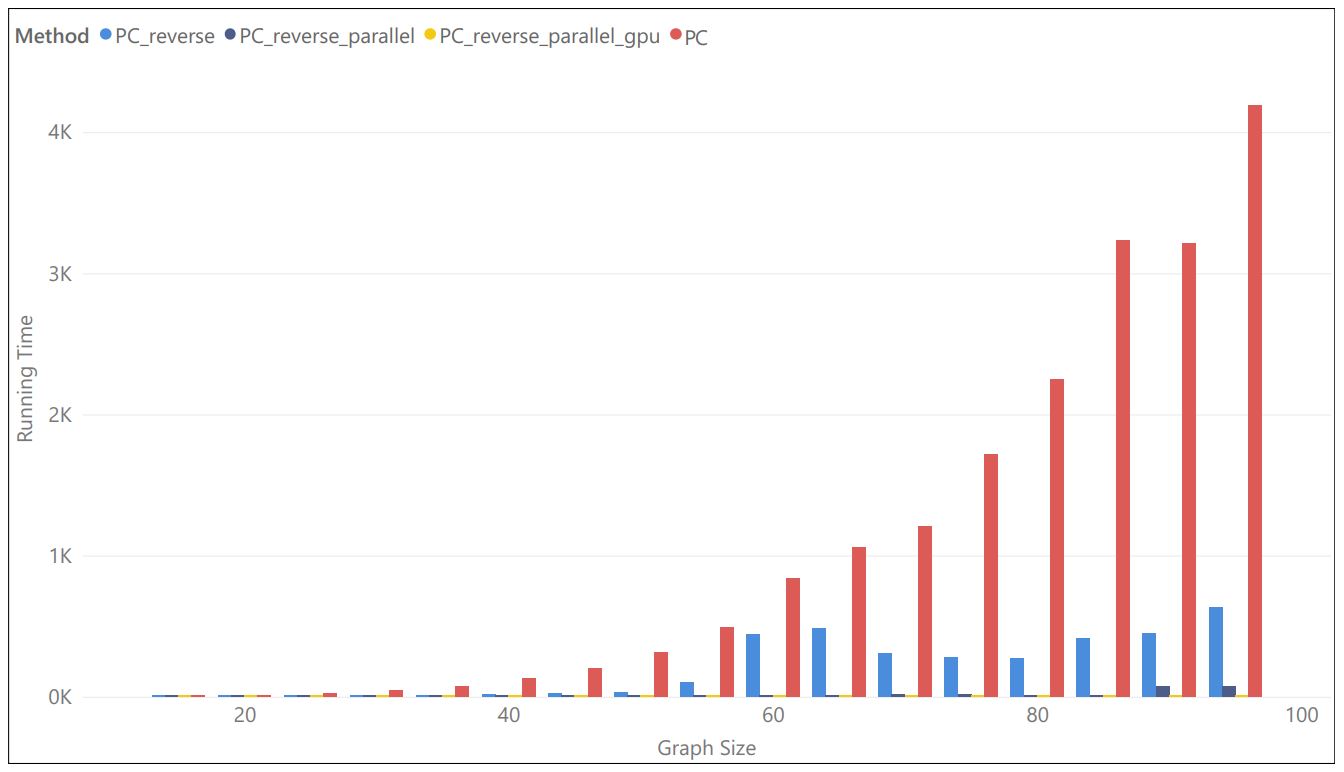}
\caption{Running time of different algorithms.}
\label{fig:runningtime}
\end{figure}

The running time of the algorithm is directly decided by the total number of CI tests performed. In Figure \ref{fig:noCI}, we compare the number of CI tests performed in different algorithms on different graphs. For the 95-node graph, PC algorithm performs 90394 tests, the PC-reverse and PC-reverse-parallel (PC-reverse-parallel-gpu) perform 8678 CI tests and 4563 CI tests, respectively. As the graph becoming denser, the improvement could be larger due to the number of tests has slower growth for the proposed algorithm. The PC-reverse-parallel-GPU and PC-reverse-parallel perform same on number of tests, because the only difference between them is computation platform (CPU or GPU).

\begin{figure}[tb]
\centering
\includegraphics[width=12cm]{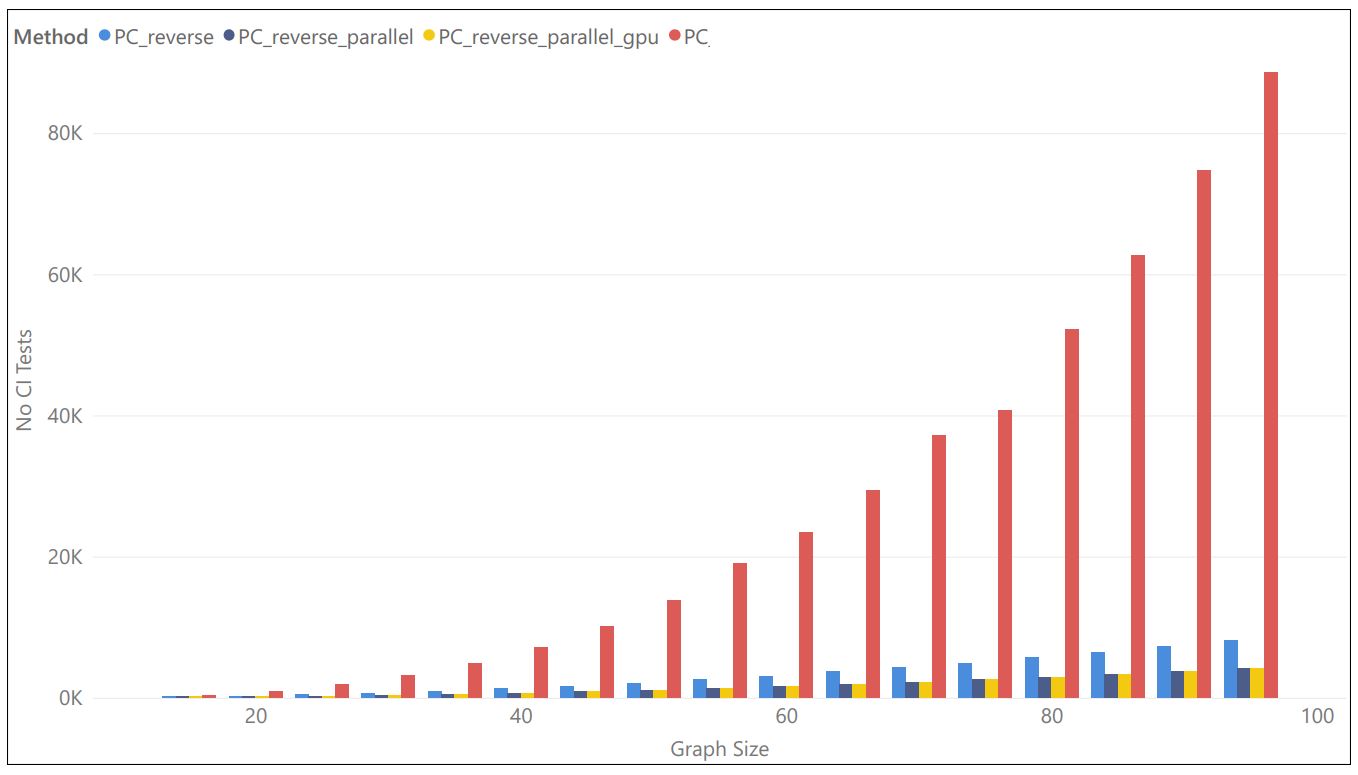}
\caption{Number of CI tests of different algorithms}
\label{fig:noCI}
\end{figure}

\subsection{Accuracy}
To compare the accuracy of the proposed algorithm and the conventional PC algorithm, we perform experiments on graphs of different sizes and evaluate their performances in terms of: false positive rate (FPR, Figure \ref{fig:fpr}), true positive rate (TPR, Figure \ref{fig:tpr}), and structural Hamming distance (SHD, Figure \ref{fig:shd}), respectively. All four algorithms have zero false positive rates, therefore the figure is omitted. All the above metrics are evaluated on the directed acyclic graph. The proposed algorithms performs slightly better compared to conventional PC algorithm.

\begin{figure}[tb]
\centering
\includegraphics[width=12cm]{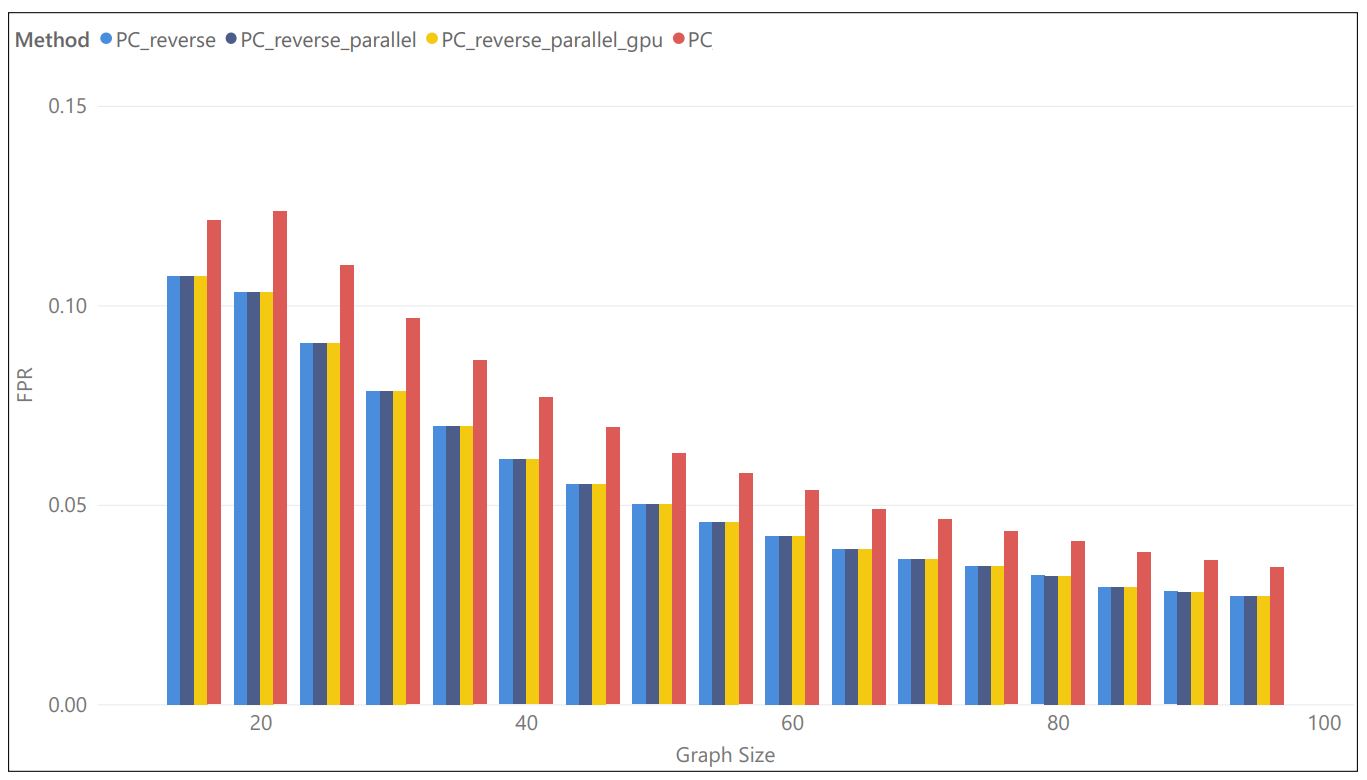}
\caption{False positive rate (FPR) of different algorithms.}
\label{fig:fpr}
\end{figure}

\begin{figure}[tb]
\centering
\includegraphics[width=12cm]{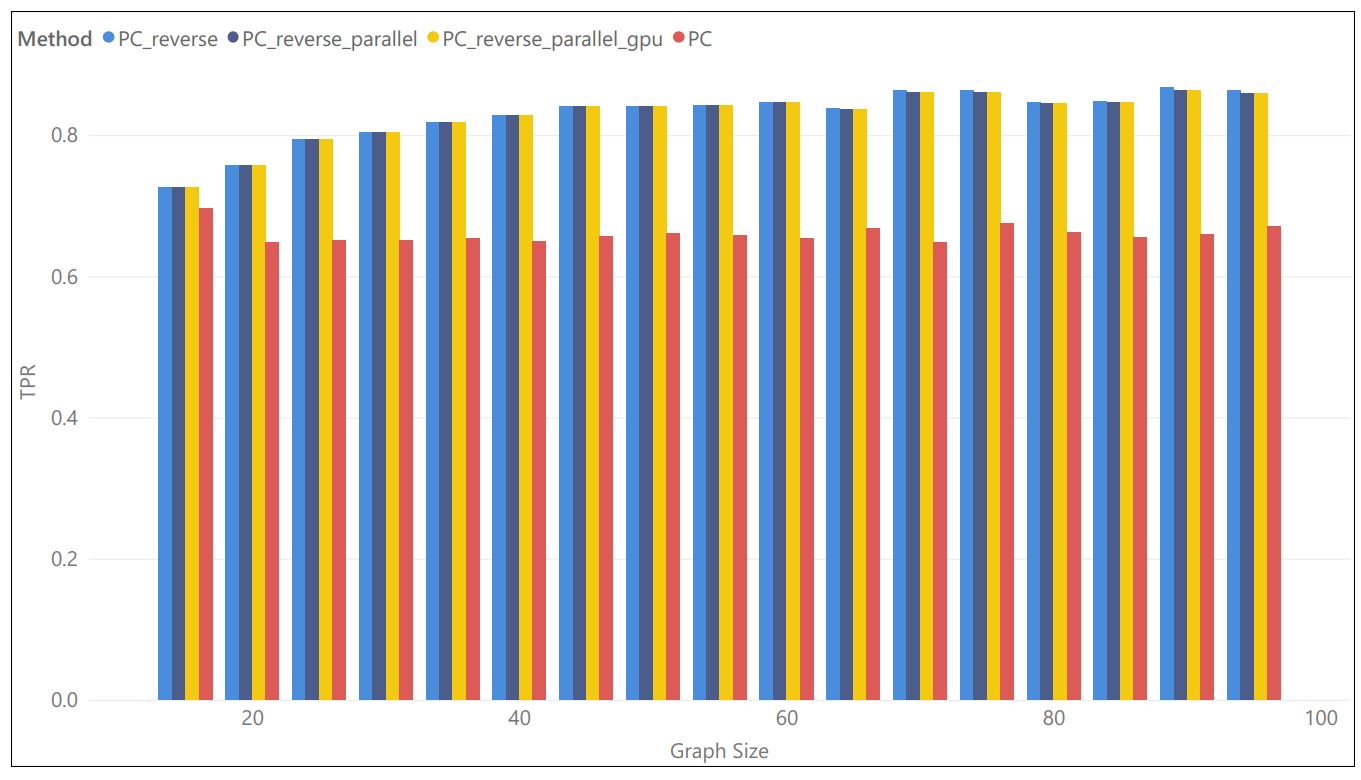}
\caption{True positive rate (TPR) of different algorithms.}
\label{fig:tpr}
\end{figure}

\begin{figure}[tb]
\centering
\includegraphics[width=12cm]{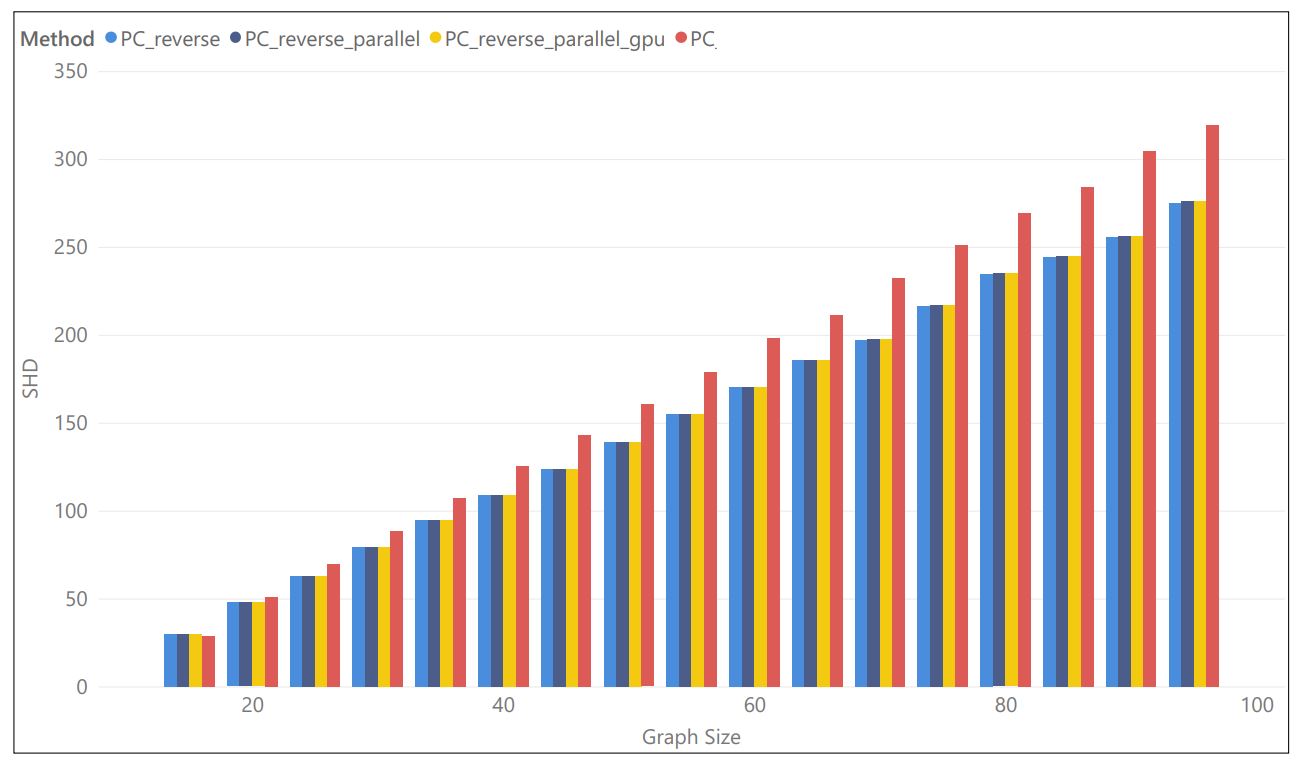}
\caption{Structural Hamming Distance (SHD) of different algorithms}
\label{fig:shd}
\end{figure}

\subsection{Potential Statistical Power Loss}
We generate a 10-node graph using the above data generation procedure. The true causal graph is shown in Fig. \ref{fig:true_graph_10_nodes}. From the true causal graph, we draw 500 Monte Carlo instances of dataset, each dataset contains the 10 random variables and $N=100,000$ data samples. We run the two algorithms 500 times on each of the dataset by setting the significance level to be $\alpha=10^{-3}$. We run the conventional PC algorithm as well as the proposed reverse pruning PC algorithm. Particularly, we focus on two edges, $X_5-X_6$ (independent) and $X_8-X_9$ (not independent). 

Obviously, $X_5$ and $X_6$ are independent given conditional set $\{X_0\}$, that is, $X_5 \CI X_6 \lvert \{X_0\}$. The PC algorithm will delete this edge at stage $l=1$. On the other hand, notice $X_5 \CI X_6 \lvert$ $\{X_0,X_1,X_2,X_3,X_4,X_7,X_8\}$ is also true, so PC-reverse algorithm should delete this edge at the stage $l=7$, which indeed happens in all 500 running. The random variables $X_8$ and  $X_9$ are dependent and the edge  $X_8-X_9$ are always kept in both algorithm. But we want to highlight two relations: $X_8 \nCI X_9 \lvert \{X_0\}$ and $X_8 \nCI X_9 \lvert \{X_0, X_1, X_2, X_3, X_4, X_5 ,X_6, X_7\}$.

\begin{figure}[tb]
\centering
\includegraphics[width=10cm]{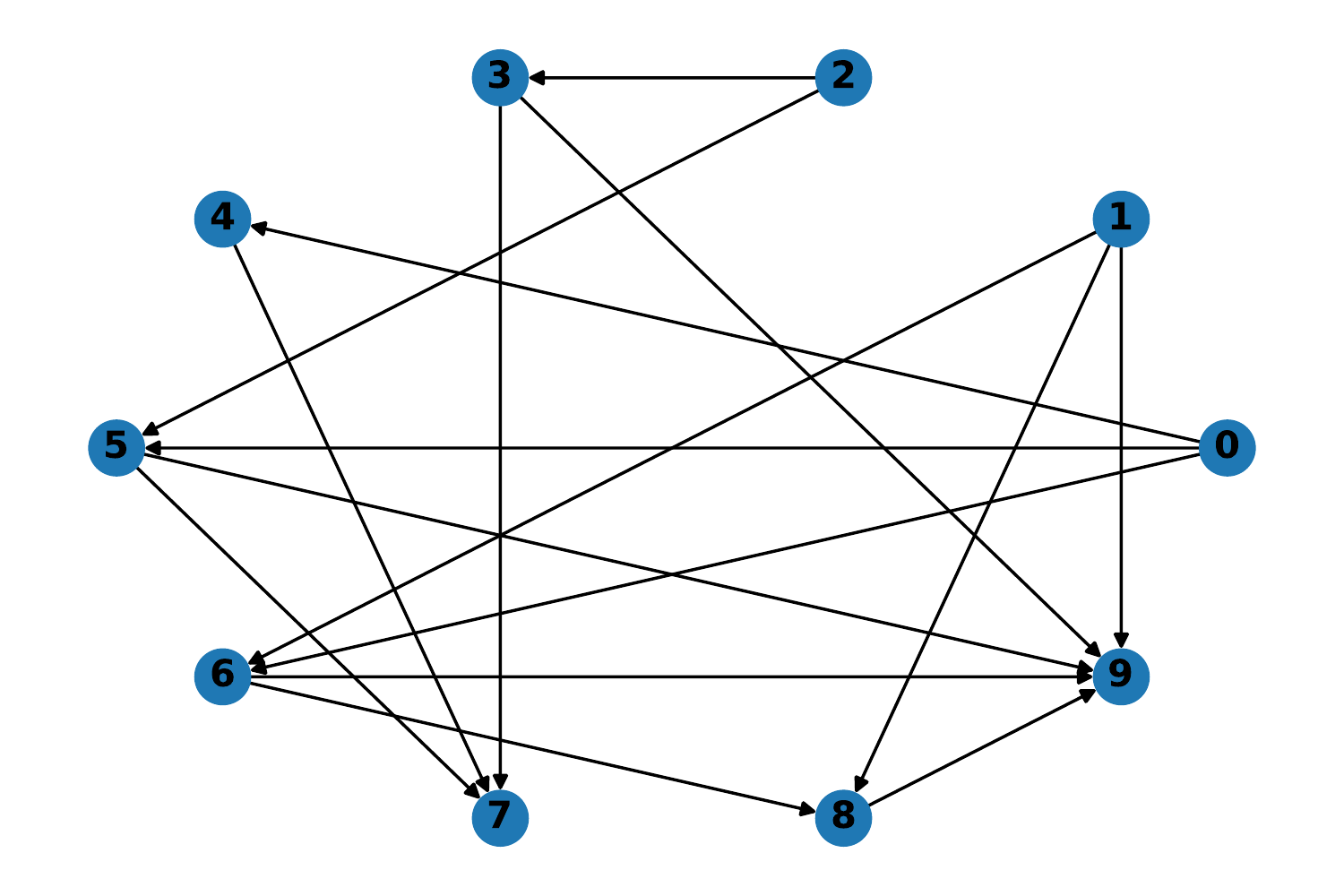}
\caption{An example causal graph of 10 nodes.}
\label{fig:true_graph_10_nodes}
\end{figure}

\subsubsection{Edge \texorpdfstring{$X_5 -X_6$}{lg}}
We plot each of the partial correlation coefficients $F_Z(r_{5,6 \cdot \{0\}})$ in all 500 running of the PC algorithm as DarkBlue in Fig. \ref{fig:Edge-5-6-forward}, and the 500 of $F_Z(r_{5,6 \cdot \{0,1,2,3,4,7,8\}})$ in the PC-reverse algorithm as LightBlue. The thresholds for these two CI tests are almost the same, $\frac{\Phi^{-1}(1-0.001/2)}{\sqrt{100000 - 1 - 3}} \approx \frac{\Phi^{-1}(1-0.001/2)}{\sqrt{100000 - 3}} = 0.01040577$, and is shown as the dashed line. It can be seen that in all 500 running, PC deletes the edge $X_5 -X_6$ successfully at stage $l=1$ because $F_Z(r_{5,6 \cdot \{0\}}) < \text{threshold}$, similarly, PC-reverse deletes the edge at stage $l=7$ in all 500 running, for the same reason.

\begin{figure}[tb]
\centering
\includegraphics[width=10cm]{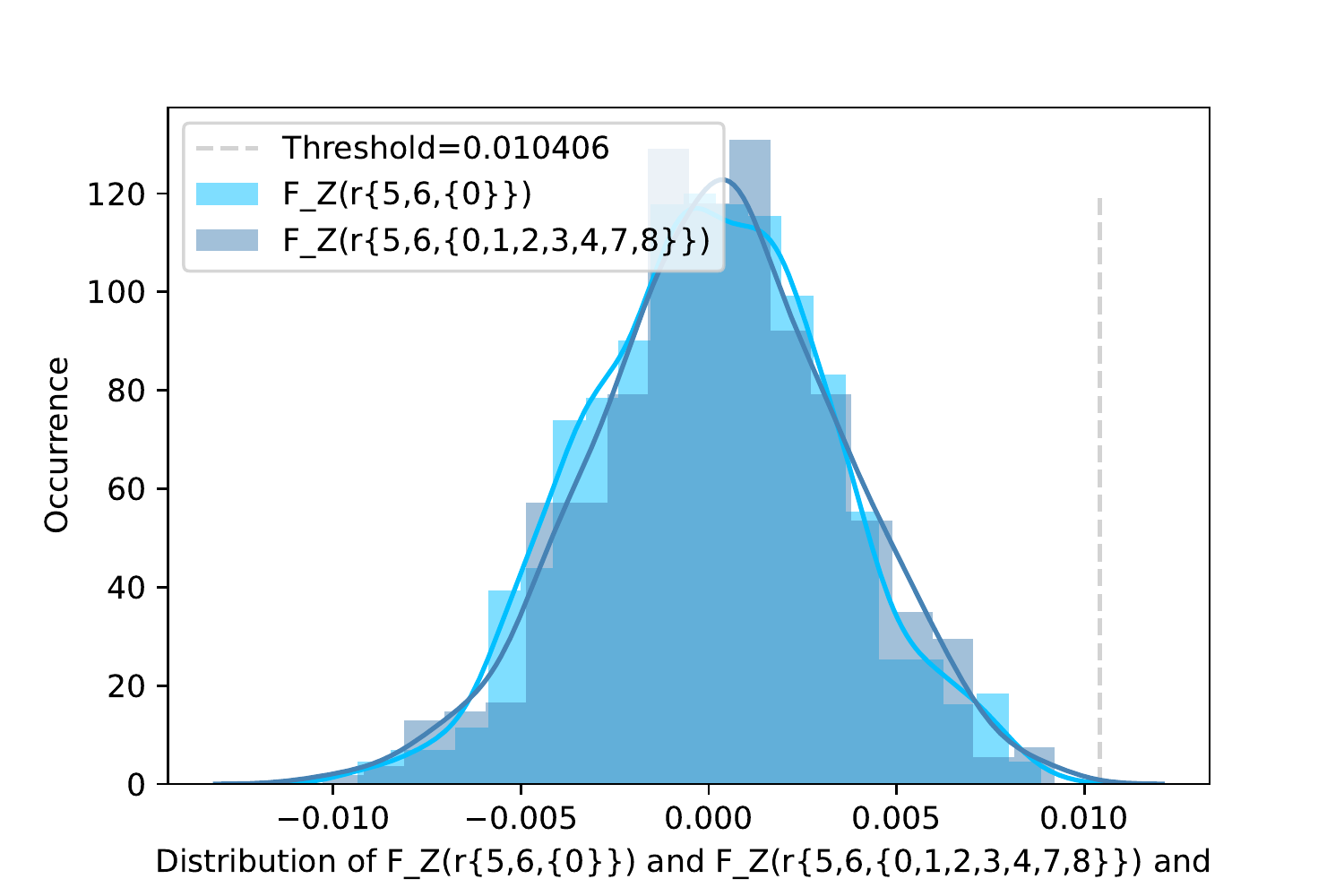}
\caption{The distribution of $F_Z(r_{5,6 \cdot \{0\}})$ and $F_Z(r_{5,6 \cdot \{0,1,2,3,4,7,8\}})$ after running for 500 times. The dotted line denotes the thresholds of 0.01040577.}
\label{fig:Edge-5-6-forward}
\end{figure}

\subsubsection{Edge \texorpdfstring{$X_8 -X_9$}{lg}}
We plot the 500 partial correlation coefficients $F_Z(r_{8,9 \cdot \{0\}})$ in the PC algorithm and the $F_Z(r_{8,9 \cdot \{0,1,2,3,4,5,6,7\}})$ in the PC-reverse algorithm in Fig. \ref{fig:Edge-8-9-forward}, shown as LightRed and DarkRed, respectively; and also the two thresholds for these two partial correlation coefficients are almost equal $\frac{\Phi^{-1}(1-0.001/2)}{\sqrt{100000 - 1 - 3}} \approx \frac{\Phi^{-1}(1-0.001/2)}{\sqrt{100000 - 8- 3}} = 0.01040577$.  Obviously, edge $X_8 - X_9$ is kept in all 500 running in both algorithms. 

The above settings are according to the assumptions (C1)-(C4). We proved both theoretically and experimentally, that add more variables into the conditional set will not affect the statistical power for originally independent relationships (such as edge $X_5-X_6$). However, for originally non-independent case (such as edge $X_8-X_9$), adding more add more variables into the conditional set could potentially reduce statistical power and wrongly delete an edge (due to estimation move closer to the threshold). This does not happen under our assumption (C1)-(C4) setting, but in real-world dataset it will also depend on other factors such as data noise.
 
\begin{figure}[tb]
\centering
\includegraphics[width=10cm]{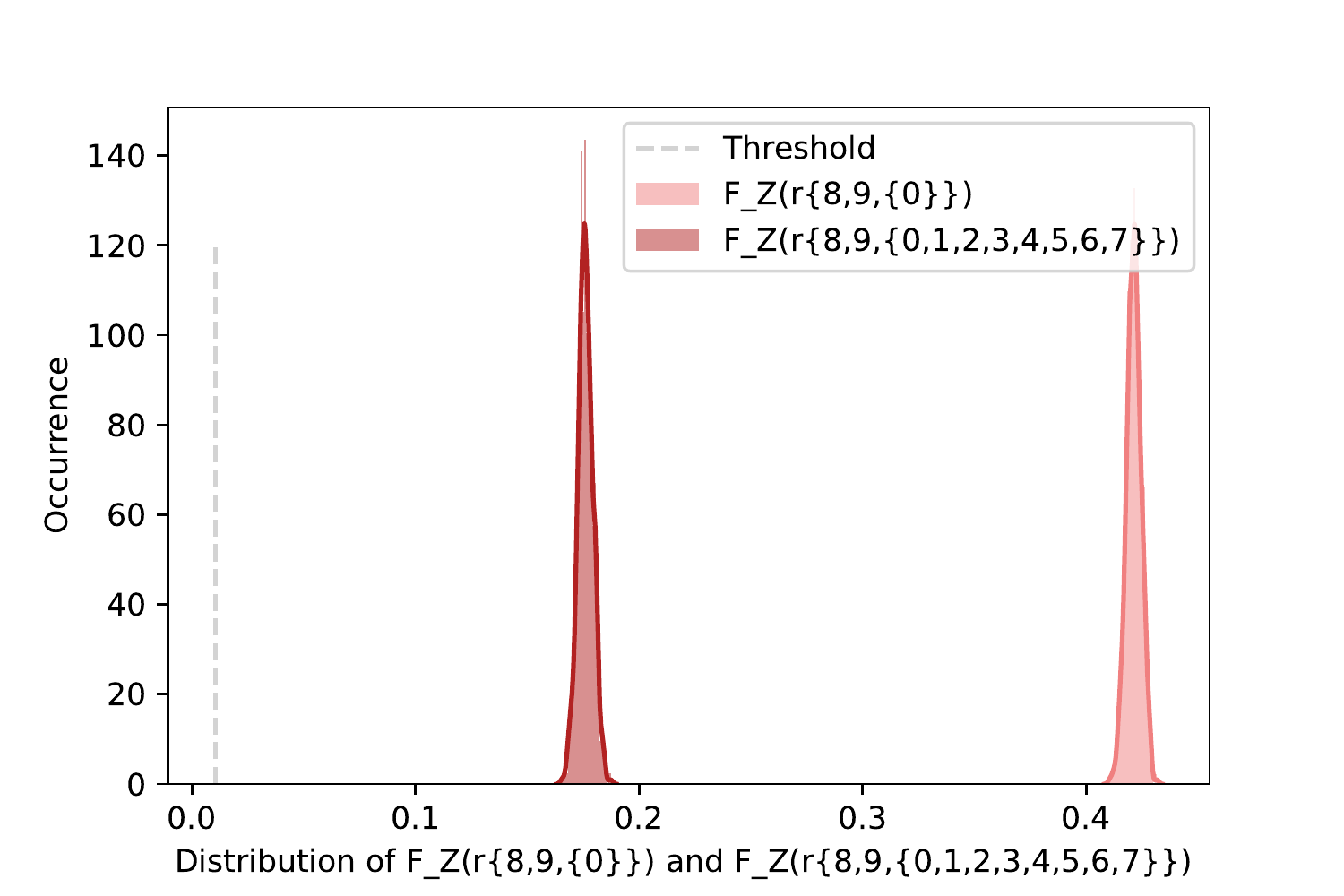}
\caption{The distribution of $F_Z(r_{8,9 \cdot \{0\}})$ and $F_Z(r_{8,9 \cdot \{0,1,2,3,4,5,6,7\}})$ after running for 500 times, the threshold is 0.01040577. }
\label{fig:Edge-8-9-forward}
\end{figure}

\subsection{On Real World Data}
In this section, we compare the conventional PC and the PC-reverse algorithms on real single-cell RNA sequencing data from Alzheimer Disease (AD) patients. The data is from a total of 48 patients (24 AD/ 24 non-AD) single cell data from the Religious Orders Study and Memory and Aging Project (ROSMAP). We use the 171 already-known AD-related genes as random variables. The sample (cell) number is $N=70,634$, including eight cell types. To achieve comparable sparsity grade, we set the significance level to be $10^{-4}$ for the PC algorithm and $10^{-6}$ for the PC-reverse algorithm. The PC-reverse demonstrates a 21.6-fold speed up compared to PC (21.6 seconds vs. 455 seconds). The PC and PC-reverse identifies 434 and 439 directed edges, respectively, and 196 of the edges are identified by both algorithms. If we only focus on the skeleton (undirected edges), then there are 296 mutual edges.

\begin{figure}[tb]
\centering
\includegraphics[width=12cm]{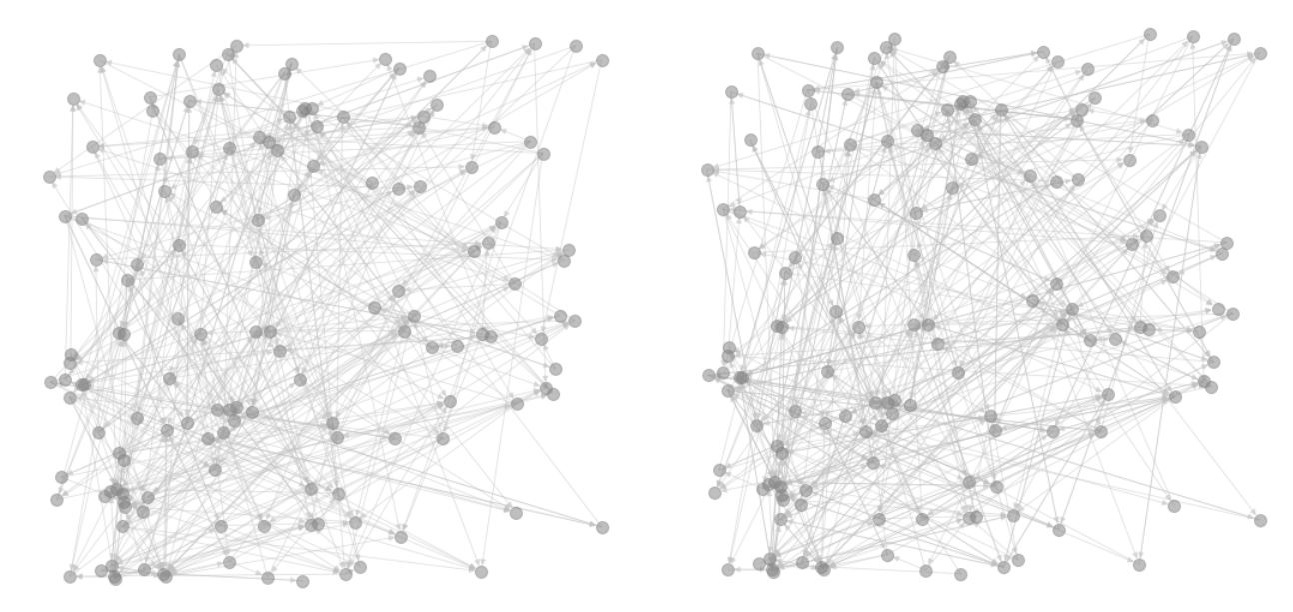}
\caption{Causal structure learned by PC and PC-reverse on Single-cell RNA sequencing (scRNA-seq) data. Left: Conventional PC; Right: PC-reverse.}
\label{fig:Edge-8-9-backward}
\end{figure}

\section{Conclusion}
We propose a fast causal discovery algorithm based on the PC algorithm. The proposed algorithm demonstrates up to near-thousandfold speed up on a simulated 100-node graph. We prove the consistency of our algorithm's result on real dataset. We also analyzed the statistical power of the proposed algorithm which is of no loss asymptotically under the mild assumptions of the data and graph dimension. Based on simulation results, our algorithm even achieves slightly higher accuracy in terms of TPR, FPR and SHD. We also provide a parallel version of the proposed algorithm and it can be GPU-accelerated. They can achieve significant speed-up compared to the conventional PC algorithm. The proposed algorithm is evaluated on a real-world dataset and demonstrates significant speed-up compared to the conventional PC algorithm.

\appendix


\section{Proof of Proposition \ref{prop:prop1}}\label{appendix:prop:prop1}
\begin{proof}
The proof is rather simple by using the rules of $d$-separation. 

The set $\mathcal{K}_{ij}^{\min}$ indicates the minimal set of variables that renders $X_i$, $X_j$ independent, which blocks every information flow path between them. Since $\mathcal{K}_{ij}^{\max}$ includes $\mathcal{K}_{ij}^{\min}$, we are ensured that all such paths are still blocked, hence Step 1. Next, we show that conditioning on the additional variables in $\mathcal{K}_{ij}^{\max}$ does not create any other paths between $X_i$ and $X_j$ (the Step 2). Indeed, conditions (R1) and (R2) are for this objective.

\begin{figure}[!htb]
\minipage{0.33\textwidth}
  \includegraphics[width=0.7\linewidth]{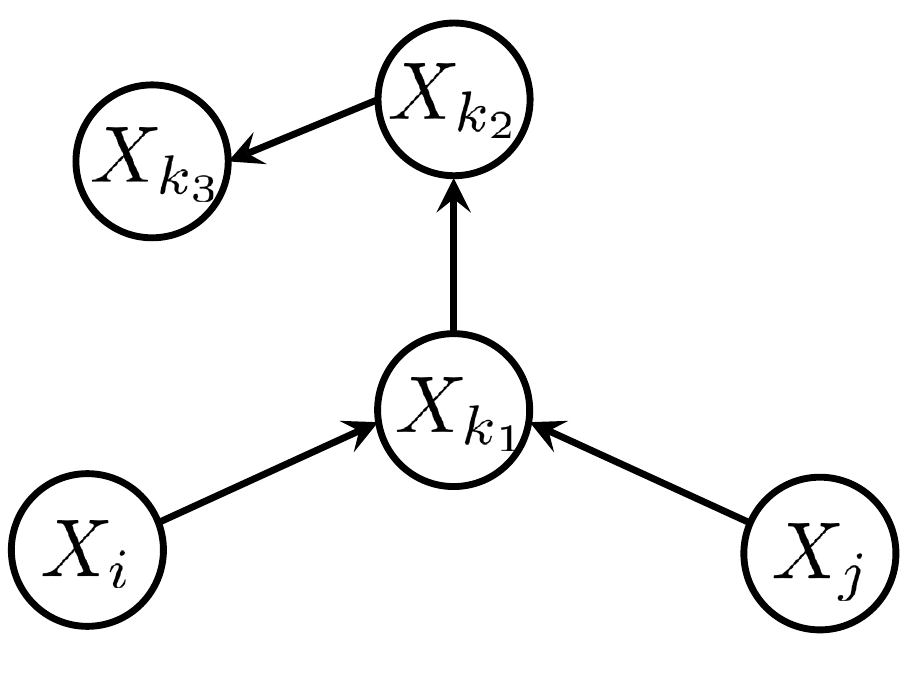}
  \caption{Example 1.}\label{fig:prop1-figure1}
\endminipage
\minipage{0.33\textwidth}%
  \includegraphics[width=0.7\linewidth]{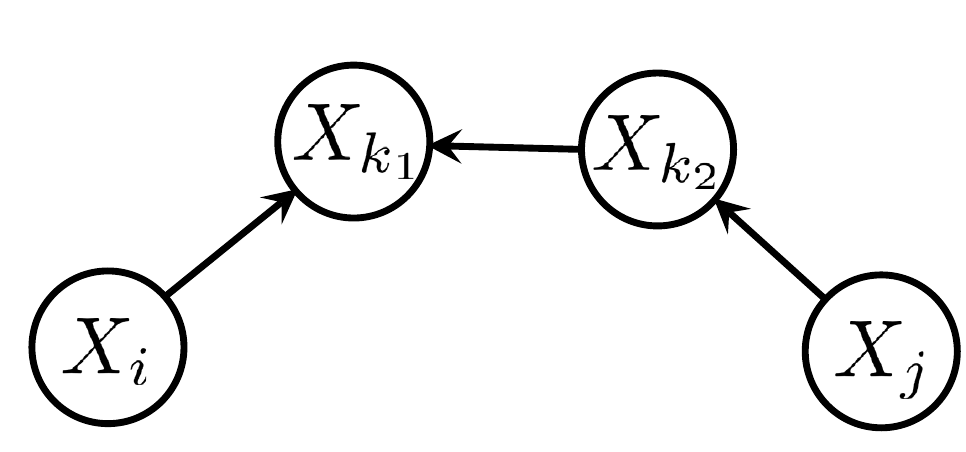}
  \caption{Example 2.} \label{fig:prop1-figure3}
\endminipage
\minipage{0.3\textwidth}%
  \includegraphics[width=1\linewidth]{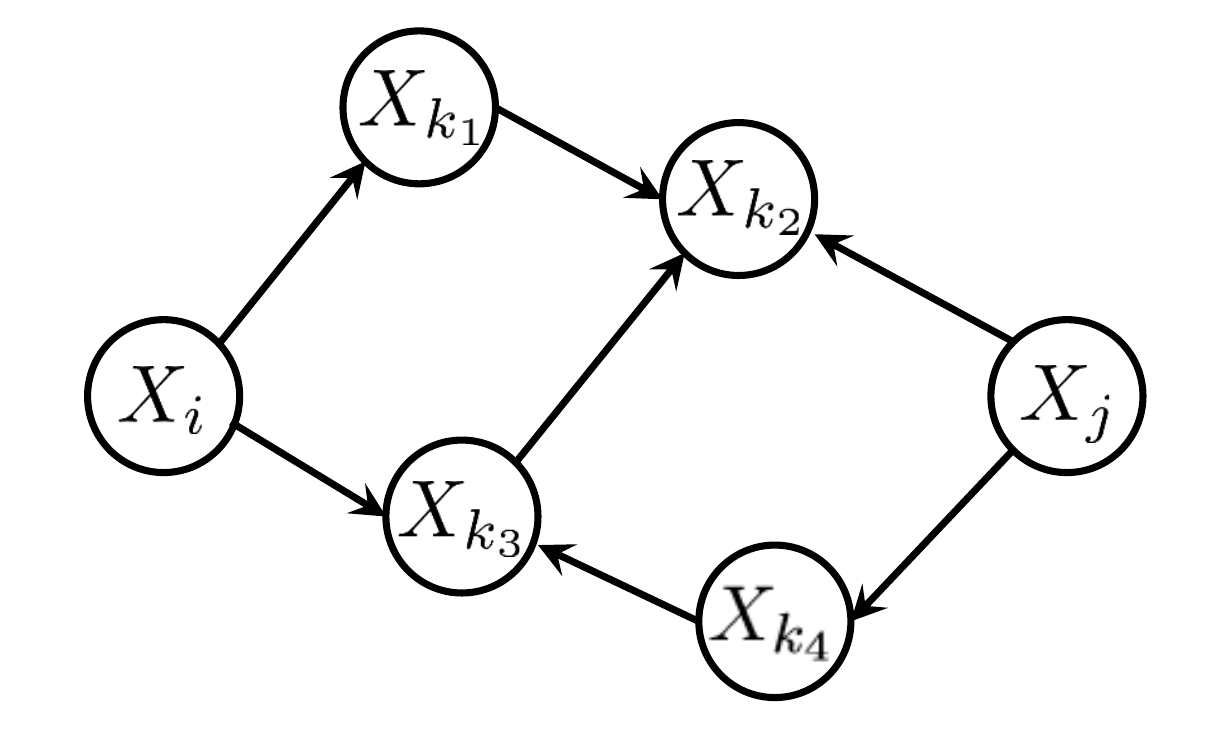}
  \caption{Example 3.} \label{fig:prop1-figure5}
\endminipage
\end{figure}

The two rules follow the same spirit of the fact that: If a collider is a member of the conditioning set Z, or has a descendant in Z, then it no longer blocks any path that traces this collider \cite{judea2000causality}. Therefore, Step 2 is to inflate the conditional set carefully without opening a new path between the two nodes.

Rule 1 is because, conditioning on a $v$-structure node or any of its descendants will open a path between $X_i$ and $X_j$ thus making them dependent, see Figure \ref{fig:prop1-figure1}. Therefore, any $v$-structure node and its descendants should be excluded in $\mathcal{K}_{ij}^{\max}$, that is, $X_{\mathcal{W}_{ij}} \cup de(X_{\mathcal{W}_{ij}})$.

Rule 2 is to exclude other types of colliders from the conditional set, which otherwise will also open a path. For example, the $X_{k_1}$ in Figure \ref{fig:prop1-figure3}. But if $X_{k_1}$ is included, we should also include some other non-collider nodes on this path ($X_{k_2}$), which will block this path again. 

A slightly more complicated example is given in Figure \ref{fig:prop1-figure5}. Originally, $\mathcal{K}_{ij}^{\min} = \emptyset$ since $X_i \CI X_j \; \vert \; \emptyset$. To find the largest conditional set $\mathcal{K}_{ij}^{\max}$, Rule 1 states we can not include $X_{k_2}, X_{k_3}$. The only options left for us are $X_{k_1}$ and $X_{k_4}$. Based on Rule 3, if we include $X_{k_3}$, at least another non-collider variable on the path where $X_{k_3}$ is a collider should also be included, which is $X_{k_4}$, indeed, $X_i \CI X_j \; \vert \; \{X_{k_3}, X_{k_4}\}$. Similarly, if we include $X_{k_1}$, then $X_{k_2}$ can also be included. Therefore $\mathcal{K}_{ij}^{\max} = \{X_{k_1}, X_{k_2}, X_{k_3}, X_{k_4}\}$.
\end{proof}

\subsection{Proof of Corollary \ref{coro:timecomplexity}}\label{appendix:coro:timecomplexity}
\begin{proof}
The PC (and PC-stable) algorithm proceeds by increasing the order of conditional independence queries or tests. Assume that for an edge $X_i - X_j$, $X_i \CI X_j \vert X_{\mathcal{K}}$ for some $X_{\mathcal{K}}$, the algorithm will find the minimal set of such $\mathcal{K}$ when $\mathcal{K} = \mathcal{K}_{ij}^{\min}$ at stage $l = |\mathcal{K}_{ij}^{\min}|$. Since the algorithm starts with a fully connected graph, at stage $l=0$ it checks the empty set $\mathcal{K} = \emptyset$; at stage $l=1$ it checks all singleton set of all the other $n-2$ neighbors, and so forth, therefore,
\begin{equation*}
    Q_{ij}(n) = \sum_{i=0}^{|\mathcal{K}_{ij}^{\min}|}\binom{n-2}{i} \approx (n-1)^{|\mathcal{K}_{ij}^{\min}|}.
\end{equation*}

The proposed reverse order pruning PC algorithm starts with a fully connected graph and proceeds by decreasing the order of conditional independence queries or CI tests. At stage $l=n-2$ it checks the set of all other nodes $X_{\mathcal{K}} = X_{\mathcal{V}} \backslash \{X_i, X_j\}$; at stage $l=n-3$ it checks all set of $n-3$ nodes from all the rest nodes, and so forth. 

Based on \ref{prop:prop1}, when set $\mathcal{K}$ excludes all nodes of $X_{\mathcal{V}} \backslash (X_i \cup X_j \cup X_{\mathcal{K}_{ij}^{\max}})$, the edge $X_i - X_j$ will be deleted.

Therefore,
\begin{align*}
    Q_{ij}^r(n) = \sum_{i=|\mathcal{K}_{ij}^{\max}|}^{n-2}\binom{n-2}{i} = \sum_{i=0}^{n-2-|\mathcal{K}_{ij}^{\max}|}\binom{n-2}{i} \approx (n-1)^{n-2-|\mathcal{K}_{ij}^{\max}|},
\end{align*}
and 
\begin{equation*}
    \frac{Q_{ij}(n)}{Q_{ij}^r(n)} = (n-1)^{|\mathcal{K}_{ij}^{\min}| + |\mathcal{K}_{ij}^{\max}| - n + 2}.
\end{equation*}
\end{proof}

\subsection{Proof of Proposition \ref{prop:whichstage}}\label{appendix:prop:whichstage}
\begin{proof}
If $X_i$ and $X_j$ are not adjacent in the true DAG, there must exist a minimal set $\mathcal{K}_{ij}^{\min} \subseteq \mathcal{V} \backslash \{i,j \}$ such that $X_i \CI X_j \vert X_{\mathcal{K}_{ij}^{\min}}$. Base on Proposition \ref{prop:prop1}, we can find a maximal $\mathcal{K}_{ij}^{\max}$ which is a superset of $X_{\mathcal{K}^{\min}}$ such that $X_i \CI X_j \vert X_{\mathcal{K}_{ij}^{\max}}$.

Since the algorithm iterates all possible $\mathcal{K}$ with its size decreasing, proposition \ref{prop:prop1} states that when the conditional set $\mathcal{K}$ is exactly $\mathcal{K}_{ij}^{\max}$, the edge will be deleted, which is at stage $l = |\mathcal{K}_{ij}^{\max}|$.

If $X_i$ and $X_j$ are adjacent in the true DAG, then there is no such $\mathcal{K}_{ij}$ that renders them independent, in this case, the algorithm will proceed until reaches the end, which is stage $l = 0$. 
\end{proof}

\subsection{Proof of Theorem \ref{them:skeletonconsistency}}\label{appendix:them:skeletonconsistency}
The proof is enlightened by the proof of a similar theorem in \cite{kalisch2007estimating}. The Lemma \ref{lemma:correlationBound}, \ref{lemma:partialCorrelationBound} and \ref{lemma:zBound} take the same spirit as Lemma 1-3 in \cite{kalisch2007estimating} but with some modifications, and the probability is bounded differently. For completeness, we still provide a complete proof for each of the following Lemma \ref{lemma:correlationBound}, \ref{lemma:partialCorrelationBound}, \ref{lemma:zBound}. They are useful to prove our theorem. 

In the following, we use $r$ to denote an estimation of the (partial) correlation coefficient $\rho$. Throughout the entire reversed order pruning PC algorithm, an edge $X_i - X_j$ may be tested multiple times on different conditional set $\mathcal{K}$, and here we define a superset to incorporate all such $\mathcal{K}$, which will be useful for our proof.

Based on Proposition \ref{prop:whichstage}, we define two different sets regarding whether an edge $X_i - X_j$ exists in the true DAG or not.

\begin{itemize}
    \item If an edge $X_i - X_j$ does not exist in the true DAG, define
\begin{equation*}
    \mathcal{T}_{i,j}^{m} \triangleq \{\mathcal{K} \subseteq \{1, \ldots, n_N\} \backslash \{i, j\}: \mathcal{K} \cap \mathcal{K}^{\text{max}}_{ij} \neq \emptyset, \; |\mathcal{K}| \geq  m\},
\end{equation*}
where $m = |\mathcal{K}^{\text{max}}_{ij}|$. The tuple $(i, j, \mathcal{T}_{i,j}^{m} )$ denotes the two nodes of a non-existing edge $X_i-X_j$ in the true DAG together with all possible $\mathcal{K}$ when testing this edge. 
    \item If an edge $X_i - X_j$ exists in the true DAG, define
\begin{equation*}
    \mathcal{T}_{i,j} \triangleq \{\mathcal{K} \subseteq \{1, \ldots, n_N\} \backslash \{i, j\}\}.
\end{equation*}
Similarly, the tuple $(i, j, \mathcal{T}_{i,j})$ denotes the two nodes of an existing edge $X_i-X_j$ in the true DAG together with all possible $\mathcal{K}$ when testing this edge.
\end{itemize}

\begin{lemma}[Lemma 1 in \cite{kalisch2007estimating}]\label{lemma:correlationBound}
Suppose the probability distribution of the $n_N$ random variables are multi-variant Gaussian, and $\sup_{i, j \in \{1,\ldots, n_N\}}|\rho_{ij}| \leq M < 1$, then 
\begin{equation*}
    \sup_{(i,j, \mathcal{T}_{i,j}^m) \cup (i, j, \mathcal{T}_{i,j})} P(|r_{ij} - \rho_{ij}| > \gamma) \leq C_1(M) \cdot (N-2) \cdot \exp \Big( (N-4)\log \frac{4-\gamma^2}{4+\gamma^2} \Big),
\end{equation*}
for any $0 < \gamma \leq 2$, and constant $C_1$ which only depends on $M$. 

\end{lemma}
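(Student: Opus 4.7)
The plan is to follow the template of Lemma 1 in Kalisch and B\"uhlmann (2007), which establishes essentially the same concentration bound. The crucial first observation is that the quantity of interest, $r_{ij} - \rho_{ij}$, depends only on the marginal distribution of $(X_i, X_j)$; the conditioning set $\mathcal{K}$ appearing in the definitions of $\mathcal{T}_{i,j}^m$ and $\mathcal{T}_{i,j}$ plays no role here. Consequently, taking the supremum over $(i,j,\mathcal{T}_{i,j}^m)\cup(i,j,\mathcal{T}_{i,j})$ amounts to taking the supremum over $\rho_{ij}\in[-M,M]$. Because the joint distribution of $X_{\mathcal{V}}$ is multivariate Gaussian, the marginal $(X_i,X_j)$ is bivariate Gaussian with correlation $\rho_{ij}$, so we may use the classical Fisher exact density for $r_{ij}$ based on $N$ i.i.d.\ samples.

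First I would write down Fisher's density formula
\begin{equation*}
f_N(r;\rho) = \frac{N-2}{\pi}\,(1-\rho^{2})^{(N-1)/2}(1-r^{2})^{(N-4)/2}\int_{0}^{\infty}\frac{dt}{(\cosh t - \rho r)^{N-1}},
\end{equation*}
valid for $|r|<1$ and $|\rho|<1$. From this representation, the integral factor is bounded by a constant depending only on $M$ (since $|\rho r|\le M$ and thus $\cosh t - \rho r \ge \cosh t - M$), uniformly in $\rho\in[-M,M]$. The remaining $r$-dependence sits in $(1-r^{2})^{(N-4)/2}$, together with the prefactor $(1-\rho^2)^{(N-1)/2}$ which I would simply bound by $1$.

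The next step is to integrate the resulting density bound over $\{r:|r-\rho|>\gamma\}\cap(-1,1)$. By symmetry of the deviation event in the roles of $r$ and $\rho$ and continuity in $\rho\in[-M,M]$, the worst case occurs at a configuration where $r$ sits at the extreme $r=\rho\pm\gamma$; more importantly, the supremum of $(1-r^2)$ on $\{|r-\rho|>\gamma\}$ for $\rho$ ranging in $[-M,M]$ is controlled by a quantity equivalent to $\frac{4-\gamma^2}{4+\gamma^2}$ after standard algebraic manipulation (this is essentially the identity that $(1-r^2)$ achieves its relevant extremum when $\rho$ and $r$ are symmetric about $0$ with $r-\rho=\pm\gamma$, giving $1-r^{2}=1-\gamma^{2}/4-\dots$). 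Combining the density bound, this extremal value, and the length-$2$ interval of integration yields the stated factor $C_1(M)\cdot(N-2)\cdot\exp\bigl((N-4)\log\frac{4-\gamma^{2}}{4+\gamma^{2}}\bigr)$, with $C_1(M)$ absorbing the $M$-dependent bound on the hyperbolic integral and a fixed numerical constant from the integration.

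The main obstacle is Step 3, namely producing the exact algebraic form $\log\frac{4-\gamma^{2}}{4+\gamma^{2}}$ in the exponent rather than a weaker $O(\gamma^{2})$ bound. The route I expect to work is to split the event $\{|r-\rho|>\gamma\}$ according to the sign of $r-\rho$, apply a one-sided bound on $(1-r^{2})$ using $\gamma$ and $M$, and then verify by direct computation that the resulting exponent matches $\log\frac{4-\gamma^{2}}{4+\gamma^{2}}$ after taking the supremum over $\rho\in[-M,M]$. The condition $M<1$ is essential here because it prevents the hyperbolic integral from blowing up; without it, no uniform constant $C_1(M)$ would exist. All remaining steps (boundary issues at $|r|=1$, measurability, reduction to an $L^{\infty}$ bound on the density) are standard.
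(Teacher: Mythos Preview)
Your proposal has a genuine gap at Step~2. The hyperbolic integral in Fisher's density,
\[
\int_{0}^{\infty}\frac{dt}{(\cosh t-\rho r)^{N-1}},
\]
is \emph{not} bounded by an $N$-independent constant depending only on $M$. Near $t=0$ we have $\cosh t-\rho r\approx(1-\rho r)+t^{2}/2$ with $1-\rho r<1$ possible, so the integral behaves like $(1-\rho r)^{-(N-3/2)}\cdot O(N^{-1/2})$ and blows up exponentially in $N$. Throwing this factor away as a constant destroys the bound.

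This exponential factor is in fact the heart of the argument. Combined with $(1-\rho^{2})^{(N-1)/2}(1-r^{2})^{(N-4)/2}$ it produces the ratio
\[
g(\rho,r)=\frac{\sqrt{(1-\rho^{2})(1-r^{2})}}{1-\rho r},
\]
and it is the identity $1-g(\rho,r)^{2}=(r-\rho)^{2}/(1-\rho r)^{2}$ that forces $\sup_{|r-\rho|\ge\gamma}g(\rho,r)=(4-\gamma^{2})/(4+\gamma^{2})$, attained at $\rho=-\gamma/2$, $r=\gamma/2$. The paper (following Kalisch--B\"uhlmann and Hotelling) works with a representation of the density that keeps this ratio intact, bounds the integrand by $g(\rho,x)^{N'-3}$, and then maximizes $g$ under the constraint $x\ge\rho+\gamma$. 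Your decomposition---bounding $(1-\rho^{2})^{(N-1)/2}\le1$ and hoping $(1-r^{2})^{(N-4)/2}$ alone delivers the exponent---cannot work even in principle: for $|\rho|\ge\gamma$ the set $\{|r-\rho|>\gamma\}$ contains $r=0$, so $\sup(1-r^{2})=1$ there and you get no decay at all. You need to retain the $(1-\rho r)^{-(N-1)}$ contribution from the integral and analyze the full ratio $g(\rho,r)$.
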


\begin{proof}
In the following, we omit the subscript $i,j$ in $r_{ij}$ and $\rho_{ij}$ when discussing the estimated and true correlation coefficient.

The work \cite{hotelling1953new} provides a detailed analysis of the distribution of $r$, the distribution density for $r$ is provided as $f_N(r,\rho)$ in \cite{hotelling1953new} eq. (25). It was also proved that the distribution function has the property of $f_N(r,\rho) = f_N(-r,-\rho)$.

The left hand side of the inequality to be proved can be expressed as, for any $i,j$ and $0 < \gamma \leq 2$,
\begin{align*}
    P(|r  - \rho | > \gamma) = & P(r  > \rho  + \gamma) + P(r   < \rho  - \gamma) \\
    = & P(r  > \rho  + \gamma) + P(- r   > - \rho  + \gamma) \\
    = & 2P(r  > \rho  + \gamma).
\end{align*}

Based on \cite{hotelling1953new} eq. (25), the above probability is upper-bounded by
\begin{align*}
    & P(r  > \rho  + \gamma) \\
    \leq & \frac{(N'-1)\Gamma(N')}{\sqrt{2\pi}\Gamma(N'+\frac{1}{2})} \Bigg( 1+\frac{2}{1-|\rho|} \Bigg)  \int_{\rho + \gamma}^{1} (1-\rho^2)^{\frac{N'}{2}}(1-x^2)^{\frac{N'-3}{2}}(1-\rho x)^{\frac{1}{2}-N'}dx
\end{align*}
where $N' = N-1$. Denote the above integral as $h(\rho, \gamma)$, then
\begin{align*}
    h(\rho, \gamma) = & \frac{(1-\rho^2)^{\frac{3}{2}}}{(1-\rho)^{\frac{5}{2}}} \int_{\rho + \gamma}^1 \Bigg( \frac{\sqrt{1-\rho^2}\sqrt{1-x^2}}{1-\rho x} \Bigg)^{N'-3}dx \\
    \leq & \frac{(1-(\rho + \gamma))(1-\rho^2)^{\frac{3}{2}}}{(1-\rho)^{\frac{5}{2}}} \max_{\rho + \gamma \leq x \leq 1} \Bigg( \frac{\sqrt{1-\rho^2}\sqrt{1-x^2}}{1-\rho x} \Bigg)^{N'-3} \\
    = & \frac{(1-(\rho + \gamma))(1-\rho^2)^{\frac{3}{2}}}{(1-\rho)^{\frac{5}{2}}} \Bigg( \frac{\sqrt{1-\frac{\gamma^2}{4}}\sqrt{1-\frac{\gamma^2}{4}}}{1-\frac{-\gamma}{2} \frac{\gamma}{2}} \Bigg)^{N'-3} \\
    = & \frac{(1-(\rho + \gamma))(1-\rho^2)^{\frac{3}{2}}}{(1-\rho)^{\frac{5}{2}}} \Bigg( \frac{4-\gamma^4}{4+\gamma^2} \Bigg)^{N'-3}.
\end{align*}

Therefore, 
\begin{align*}
    & P(|r - \rho| > \gamma) = 2P(r > \rho + \gamma) \\
    \leq & 2\frac{(N'-1)\Gamma(N')}{\sqrt{2\pi}\Gamma(N'+\frac{1}{2})} \Bigg( 1+\frac{2}{1-|\rho|} \Bigg)  \frac{(1-(\rho + \gamma))(1-\rho^2)^{\frac{3}{2}}}{(1-\rho)^{\frac{5}{2}}} \Bigg( \frac{4-\gamma^4}{4+\gamma^2} \Bigg)^{N'-3}.
\end{align*}
Using the fact that $|\rho| \leq M < 1$ from (C3) and $\frac{\Gamma(N')}{\Gamma(N'+1/2)} \leq const$, we have 
\begin{align*}
    P(|r  - \rho | > \gamma)  \leq & 2\frac{(N'-1)\Gamma(N')}{\sqrt{2\pi}\Gamma(N'+\frac{1}{2})} \Bigg( 1+\frac{2}{1-|\rho|} \Bigg)  \frac{(1-(\rho + \gamma))(1-\rho^2)^{\frac{3}{2}}}{(1-\rho)^{\frac{5}{2}}} \Bigg( \frac{4-\gamma^4}{4+\gamma^2} \Bigg)^{N'-3} \\
    \leq & 2\frac{(N'-1)\Gamma(N')}{\sqrt{2\pi}\Gamma(N'+\frac{1}{2})} \Bigg( 1+\frac{2}{1-|\rho|} \Bigg)  \frac{1}{(1-\rho)^{\frac{5}{2}}} \Bigg( \frac{4-\gamma^4}{4+\gamma^2} \Bigg)^{N'-3} 
\end{align*}   
\begin{align*}
    \leq & 2\frac{(N'-1)\Gamma(N')}{\sqrt{2\pi}\Gamma(N'+\frac{1}{2})} \Bigg( 1+\frac{2}{1-M} \Bigg)  \frac{1}{(1-M)^{\frac{5}{2}}} \Bigg( \frac{4-\gamma^4}{4+\gamma^2} \Bigg)^{N'-3} \\
    \leq & C_1(M) \cdot (N'-1) \Bigg( \frac{4-\gamma^4}{4+\gamma^2} \Bigg)^{N'-3} \\
    = & C_1(M) \cdot (N-2)\exp \big( (N-4)\log\frac{4-\gamma^4}{4+\gamma^2} \big)
\end{align*}
where $C_1(M) = \frac{2}{\sqrt{2\pi}} (1+\frac{2}{1-M})\frac{1}{(1-M)^{\frac{5}{2}}}$ is a function of only $M$, and $C_1(M) \in (2.39365, \infty)$ when $M \in (0, 1)$.
\end{proof}

\begin{lemma}[Lemma 2 in \cite{kalisch2007estimating}]\label{lemma:partialCorrelationBound}
Suppose the probability distribution of the $n_N$ random variables are multi-variant Gaussian, and $\sup_{i, j \in \{1,\ldots, n_N\}}|\rho_{ij}| \leq M < 1$. 
\begin{equation*}
    \sup_{(i,j, \mathcal{T}_{i,j}^m) \cup (i, j, \mathcal{T}_{i,j})} P(|r_{ij \cdot \mathcal{K}} - \rho_{ij \cdot \mathcal{K}}| > \gamma) \leq C_1(M) \cdot (N-n_N) \cdot \exp \Big( (N-n_N-2)\log \frac{4-\gamma^2}{4+\gamma^2} \Big),
\end{equation*}
for any $0 < \gamma \leq 2$, and constant $C_1(M)$, $2.39365 < C_1(M) < \infty$ as $M$ varies in $(0, 1)$.
\end{lemma}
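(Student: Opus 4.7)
\medskip

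\noindent\textbf{Proof plan for Lemma \ref{lemma:partialCorrelationBound}.}

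The plan is to reduce the partial-correlation statement to the ordinary-correlation statement of Lemma \ref{lemma:correlationBound} by invoking a classical distributional fact from multivariate Gaussian theory, and then to take a uniform supremum over the conditioning set $\mathcal{K}$.

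First, I would recall the following standard result (see, e.g., Anderson, \emph{Introduction to Multivariate Statistical Analysis}): if $X_{\mathcal{V}}$ is jointly Gaussian and we draw $N$ i.i.d.\ observations, then the sample partial correlation $r_{ij\cdot\mathcal{K}}$ has exactly the same sampling distribution as the ordinary sample correlation coefficient based on $N-|\mathcal{K}|$ i.i.d.\ observations of a bivariate Gaussian with true correlation $\rho_{ij\cdot\mathcal{K}}$. This is the only nontrivial input beyond Lemma \ref{lemma:correlationBound}, and it is a textbook fact, so I would state it and cite it without re-deriving.

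Next, I would apply Lemma \ref{lemma:correlationBound} with $N$ replaced by $N-|\mathcal{K}|$ and with $\rho$ replaced by $\rho_{ij\cdot\mathcal{K}}$. Note that the condition $|\rho_{ij\cdot\mathcal{K}}|\le M<1$ required by Lemma \ref{lemma:correlationBound} is ensured by the second bound of condition (C3). This yields, for every admissible $(i,j,\mathcal{K})$ and every $0<\gamma\le 2$,
\begin{equation*}
P\bigl(|r_{ij\cdot\mathcal{K}}-\rho_{ij\cdot\mathcal{K}}|>\gamma\bigr)
\;\le\; C_1(M)\,(N-|\mathcal{K}|-2)\,\exp\!\Bigl((N-|\mathcal{K}|-4)\log\tfrac{4-\gamma^2}{4+\gamma^2}\Bigr).
\end{equation*}

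Finally, I would take the supremum over $\mathcal{K}$. Since $\mathcal{K}\subseteq\{1,\ldots,n_N\}\setminus\{i,j\}$, we have $|\mathcal{K}|\le n_N-2$. Writing $\alpha \triangleq -\log\frac{4-\gamma^2}{4+\gamma^2}>0$ and differentiating the right-hand side in $k=|\mathcal{K}|$, one checks that the bound is monotone non-decreasing in $k$ whenever $(N-k-2)\alpha>1$, which holds comfortably in the regime $N\gg n_N$ guaranteed by condition (C2). Substituting the worst case $k=n_N-2$ then gives the target bound $C_1(M)(N-n_N)\exp\bigl((N-n_N-2)\log\tfrac{4-\gamma^2}{4+\gamma^2}\bigr)$.

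The main (mild) obstacle is the last step: verifying that the $k=n_N-2$ substitution is genuinely an upper bound and not merely a convenient-looking expression. The monotonicity calculation above handles this, and the regime assumption needed is already built into (C2), so no new hypothesis is required. Everything else is a direct transfer from Lemma \ref{lemma:correlationBound}.
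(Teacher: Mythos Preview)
Your proposal is correct and follows essentially the same approach as the paper: invoke the classical distributional equivalence between sample partial correlations and ordinary correlations with reduced sample size (the paper cites Fisher 1924, you cite Anderson), apply Lemma~\ref{lemma:correlationBound} with $N$ replaced by $N-|\mathcal{K}|$, and then maximize the bound over $|\mathcal{K}|$ by monotonicity, substituting the worst case $|\mathcal{K}|=n_N-2$. Your monotonicity verification is in fact a bit more careful than the paper's, which simply asserts that $f(x)=xe^{-(x-2)}$ is decreasing for $x>1$ without tracking the $\gamma$-dependent rate in the exponent.
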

\begin{proof}
The proof is trivial using the following fact. For $n_N$ random variables with multi-variant Gaussian distribution and any two variables $X_i$ and $X_j$, if the CDF of estimated correlation coefficient $r_{ij}$ is denoted as $F(\cdot \vert N, \rho_{ij})$, then the CDF of estimated partial correlation coefficient $r_{ij \cdot \mathcal{K}}$ is $F(\cdot \vert N - |\mathcal{K}|\vert , \rho_{ij \vert \mathcal{K}})$, see \cite{fisher1924distribution} page 330.
\begin{align}
    & \sup_{(i,j, \mathcal{T}_{i,j}^m)  \cup (i, j, \mathcal{T}_{i,j})} P(|r_{ij \cdot \mathcal{K}} - \rho_{ij \cdot \mathcal{K}}| > \gamma) \notag \\
    \leq & C_1(M) \cdot (N-2-|\mathcal{K}|) \cdot \exp \Big( (N-4-|\mathcal{K}|) \log \frac{4-\gamma^2}{4+\gamma^2} \Big) \label{eq:intermediate}.
\end{align}

The term $\log \frac{4-\gamma^2}{4+\gamma^2} < 0$ for $\gamma > 0$ and the function $f(x) = xe^{-(x-2)}$ is monotonically decreasing when $x > 1$. Therefore, formula \eqref{eq:intermediate} is upper-bounded by replacing $|\mathcal{K}|$ with its largest possible value. 

For an edge $X_i - X_j$ does not exist in the true DAG, the proposed algorithm iterates from stage $l=n_N-2$ to $n_N-2-m$ and deletes the edge when $\mathcal{K}$ is $\mathcal{K}^{\text{max}}_{ij}$. On the other hand, if edge $X_i - X_j$ does not exist in the true DAG, the proposed algorithm iterates from stage $l=n_N-2$ to 0. Therefore, $\max_{(i,j, \mathcal{K} \in \mathcal{T}_{i,j}^m)}|\mathcal{K}| = \max_{(i,j, \mathcal{K} \in \mathcal{T}_{i,j})}|\mathcal{K}| = n_N-2$.

Therefore we have
\begin{align*}
    \eqref{eq:intermediate} \leq & C_1(M) \cdot (N-2-(n_N-2)) \cdot \exp \Big( (N-4-(n_N-2)) \log \frac{4-\gamma^2}{4+\gamma^2} \Big) \\
    \leq & C_1(M) \cdot (N-n_N) \cdot \exp \Big( (N-n_N-2)\log \frac{4-\gamma^2}{4+\gamma^2} \Big),
\end{align*}
where the $C_1(M)$ is the same as defined in Lemma \ref{lemma:correlationBound}.
\end{proof}

\begin{lemma}[Lemma 3 in \cite{kalisch2007estimating}]\label{lemma:zBound}
Suppose the probability distribution of the $n_N$ random variables are multi-variant Gaussian, and $\sup_{i, j \in \{1,\ldots, n_N\}}|\rho_{ij}| \leq M < 1$, then for $\mathcal{K} \subseteq \{1,\ldots,n_N\} \backslash \{i,j\}$,
\begin{equation*}
    \sup_{(i,j, \mathcal{T}_{i,j}^m) \cup (i,j, \mathcal{T}_{i,j})} P(|Z_{ij \cdot \mathcal{K}} - z_{ij \cdot \mathcal{K}}| > \gamma) \leq O(N-n_N) \cdot \exp \Big( (N-n_N-2)\log \frac{4-(\gamma/L)^2}{4+(\gamma/L)^2} \Big),
\end{equation*}
for any $0 < \gamma \leq 2$, $L = \frac{1}{1-M^2}$. In the above formula, $Z_{ij \cdot \mathcal{K}} \triangleq F_z(r_{ij \cdot \mathcal{K}})$ and $z_{ij \cdot \mathcal{K}} \triangleq  F_z(\rho_{ij \cdot \mathcal{K}})$.
\end{lemma}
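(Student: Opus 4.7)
}

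The plan is to reduce the deviation event for the Fisher $z$-transform to a deviation event for the raw partial correlation, and then invoke Lemma~\ref{lemma:partialCorrelationBound}. Since $F_z = \arctanh$ has derivative $F_z'(x) = 1/(1-x^2)$, on any interval $[-M,M] \subset (-1,1)$ the map $F_z$ is Lipschitz with constant $L = 1/(1-M^2)$. This is the key analytic input: whenever both $r_{ij\cdot\mathcal{K}}$ and $\rho_{ij\cdot\mathcal{K}}$ fall inside $[-M,M]$, the mean value theorem gives some $\xi$ between them with
\begin{equation*}
|Z_{ij\cdot\mathcal{K}} - z_{ij\cdot\mathcal{K}}| \;=\; \frac{|r_{ij\cdot\mathcal{K}} - \rho_{ij\cdot\mathcal{K}}|}{1-\xi^2} \;\leq\; L\,|r_{ij\cdot\mathcal{K}} - \rho_{ij\cdot\mathcal{K}}|,
\end{equation*}
so that the event $\{|Z_{ij\cdot\mathcal{K}} - z_{ij\cdot\mathcal{K}}|>\gamma\}$ forces $|r_{ij\cdot\mathcal{K}} - \rho_{ij\cdot\mathcal{K}}| > \gamma/L$ on this good event.

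First I would record the hypothesis $\sup|\rho_{ij\cdot\mathcal{K}}|\le M<1$ so that $\rho_{ij\cdot\mathcal{K}}$ is always in the interior $[-M,M]$. Next I would split the probability as
\begin{equation*}
P\bigl(|Z-z|>\gamma\bigr) \;\leq\; P\bigl(|Z-z|>\gamma,\;|r|\le M\bigr) \;+\; P\bigl(|r|>M\bigr),
\end{equation*}
apply the Lipschitz/MVT step to the first term to bound it by $P(|r-\rho|>\gamma/L)$, and handle the second (tail) term by choosing a slightly enlarged bound $M^* \in (M,1)$ so that $|r|>M^*$ forces $|r-\rho|>M^*-M$ and the extra probability $P(|r|\in(M,M^*])$ can also be folded into a $P(|r-\rho|>c)$ statement for a constant $c=c(M,M^*)>0$ independent of $N$. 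Both bounding events then have the form ``$|r-\rho|>\text{const}$'', and Lemma~\ref{lemma:partialCorrelationBound} applies verbatim, yielding exponential-in-$N$ decay.

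Finally I would combine the two bounds with a union bound and observe that, since the second term's threshold $c$ is a constant (not shrinking with $N$), its exponential rate $\log\frac{4-c^2}{4+c^2}$ is strictly more negative than $\log\frac{4-(\gamma/L)^2}{4+(\gamma/L)^2}$ for all $\gamma\le c\cdot L$, hence it is absorbed into the $O(N-n_N)$ prefactor. The polynomial factor $C_1(M)$ from Lemma~\ref{lemma:partialCorrelationBound} and the factor 2 from the union bound likewise collapse into $O(\cdot)$. This yields the claimed bound with rate function $\log\frac{4-(\gamma/L)^2}{4+(\gamma/L)^2}$.

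The main obstacle I anticipate is the case $|r|>M$: naively, when $|\rho|$ is itself close to $M$, the event $|r|>M$ need not imply a large deviation $|r-\rho|$, which blocks a direct reduction to Lemma~\ref{lemma:partialCorrelationBound}. The fix above, introducing a slack parameter $M^*>M$ chosen as a constant strictly less than $1$, is the cleanest way around this; any looseness this introduces in the constant inside the exponent can be absorbed because $\gamma\le 2$ is bounded and so the comparison of exponential rates is strict. A secondary technical point is to verify that the supremum over $(i,j,\mathcal{K})\in\mathcal{T}_{i,j}^m\cup\mathcal{T}_{i,j}$ is preserved through both steps, which follows because the bounds in Lemma~\ref{lemma:partialCorrelationBound} already hold uniformly over this index family.
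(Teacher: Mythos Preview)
Your overall strategy---reduce $|Z-z|>\gamma$ to $|r-\rho|>\gamma/L$ via the Lipschitz property of $\arctanh$ and then invoke Lemma~\ref{lemma:partialCorrelationBound}---is exactly the paper's. The execution differs in one place. The paper linearizes $F_z$ at the \emph{deterministic} point $\rho$, writing $Z-z = (r-\rho)/(1-\rho^2)$ via first-order Taylor, and then applies the elementary product split $P(|XY|>\gamma)\le P(|X|>L)+P(|Y|>\gamma/L)$ with $X=1/(1-\rho^2)$ and $Y=r-\rho$. Because $|\rho|\le M$ deterministically, $P(|X|>L)=0$ and the tail term you worry about simply never appears; only $P(|r-\rho|>\gamma/L)$ remains and Lemma~\ref{lemma:partialCorrelationBound} finishes. (The paper silently treats the Taylor expansion as an equality, so your MVT formulation is in fact the more honest one.)

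Your MVT route, however, has a gap in the tail step. The claim that $P(|r|\in(M,M^*])$ ``can also be folded into a $P(|r-\rho|>c)$ statement for a constant $c>0$'' fails uniformly in $\rho$: take $\rho=M$ and $r=M+\epsilon$; then $|r-\rho|=\epsilon$ is arbitrarily small, so no fixed $c>0$ works. The repair is to move the split point from $M$ to $M^*$ at the outset: on $\{|r|\le M^*\}$ the intermediate point $\xi$ satisfies $|\xi|\le M^*$, so the Lipschitz constant becomes $L^*=1/(1-(M^*)^2)$ and the first term is bounded by $P(|r-\rho|>\gamma/L^*)$; on $\{|r|>M^*\}$ you get $|r-\rho|>M^*-M$ directly. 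Both pieces now feed cleanly into Lemma~\ref{lemma:partialCorrelationBound}. The price is that the exponent carries $\gamma/L^*$ rather than $\gamma/L$; since $M^*\in(M,1)$ is a free constant and only the exponential order is used downstream in Theorem~\ref{them:skeletonconsistency}, this is immaterial for the application, though it does not recover the exact constant $L=1/(1-M^2)$ printed in the lemma.
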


\begin{proof}
Based on the $z$-transform formula $F_z(\rho) = \frac{1}{2} \log (\frac{1+\rho}{1-\rho})$, the function $F_z(r)$ can be approximated at point $r=\rho$ using the first order Taylor series
\begin{align*}
     F_z(r) & \approx F_z(\rho) + F_z(\rho)'(r-\rho) \\
     & = F_z(\rho) + \frac{1}{1-\rho^2}(r-\rho).
\end{align*}
Next, we introduce the following basic fact from probability theory, if $|X| \geq 1$,
\begin{align*}
    P(|XY| > \gamma) = & P(|XY| > \gamma, |X| > L) + P(|XY| > \gamma, 1 \leq |X| \leq L) \\
    \leq & P(|X|> L) + P(|Y| > \gamma/L).
\end{align*}
Therefore, 
\begin{align*}
    P(|Z_{ij \cdot \mathcal{K}} - z_{ij \cdot \mathcal{K}}| > \gamma) = & P(|F_z(r_{ij \cdot \mathcal{K}}) - F_z(\rho_{ij \cdot \mathcal{K}})| > \gamma) \\
    = & P \Big( |\frac{1}{1-\rho_{ij \cdot \mathcal{K}}^2}(r_{ij \cdot \mathcal{K}}-\rho_{ij \cdot \mathcal{K}})| > \gamma \Big) \\
    \leq & P \Big( |\frac{1}{1-\rho_{ij \cdot \mathcal{K}}^2}| > L \Big) + P \Big( |r_{ij \cdot \mathcal{K}}-\rho_{ij \cdot \mathcal{K}})| > \frac{\gamma}{L} \Big),
\end{align*}
where $L = \frac{1}{1-M^2}$.

The condition (C3) indicates $\rho_{ij \cdot \mathcal{K}} \leq M$, thus $P \big( |\frac{1}{1-\rho_{ij \cdot \mathcal{K}}^2}| \leq L = \frac{1}{1-M^2} \big) = 1$, and the first term of the above equation equals to zero. Based on Lemma \ref{lemma:partialCorrelationBound}, the second term is upper-bounded by $C_1(M) \cdot (N-n_N) \cdot \exp \Big( (N-n_N-2)\log \frac{4-(\frac{\gamma}{L})^2}{4+(\frac{\gamma}{L})^2} \Big)$, where $C_1(M)$ is a function defined in Proposition \ref{prop:prop1} and is only dependent on $M$. The proof is complete.
\end{proof}

\textbf{Proof of Theorem \ref{them:skeletonconsistency}} Next, we use the Lemma \ref{lemma:correlationBound}, \ref{lemma:partialCorrelationBound}, \ref{lemma:zBound} to prove the Theorem \ref{them:skeletonconsistency}. If an error occurs when testing edge $X_i - X_j$ conditioned on set $\mathcal{K}$ in the proposed algorithm, it must be the following two cases: (i) the edge $X_i - X_j$ does not exist in the true DAG but is kept by our algorithm and (ii) the edge $X_i - X_j$ exists in the DAG but is deleted by our algorithm. We denote the former event as a Type I error and the later as a Type II error, and use $E_{ij \cdot \mathcal{K}}^I$ and $E_{ij \cdot \mathcal{K}}^{II}$ to represent them, respectively.
\begin{align*}
    \text{Event } E_{ij \cdot \mathcal{K}}^I: |Z_{ij \cdot \mathcal{K}}| > \frac{\Phi^{-1}(1-\frac{\alpha}{2})}{\sqrt{N - |\mathcal{K}| - 3}} \text{ but } z_{ij \cdot \mathcal{K}} = 0, \\
    \text{Event } E_{ij \cdot \mathcal{K}}^{II}: |Z_{ij \cdot \mathcal{K}}| \leq \frac{\Phi^{-1}(1-\frac{\alpha}{2})}{\sqrt{N - |\mathcal{K}| - 3}} \text{ but } z_{ij \cdot \mathcal{K}} \neq 0.
\end{align*}
Therefore,
\begin{align*}
    & P(\text{Error happens in the reverse order pruning PC algorithm}) \\
    = & P \Bigg( \bigcup_{(i, j, \mathcal{T}_{i,j}^m)}E_{ij \cdot \mathcal{K}}^I  \cup \bigcup_{(i, j, \mathcal{T}_{i,j})}E_{ij \cdot \mathcal{K}}^{II} \Bigg) \\
    = & \bigcup_{(i, j, \mathcal{T}_{i,j}^m)} P(E_{ij \cdot \mathcal{K}}^I)  \cup \bigcup_{(i, j, \mathcal{T}_{i,j})} P(E_{ij \cdot \mathcal{K}}^{II}) \\
    \leq & O(|(i, j, \mathcal{T}_{i,j}^m)|) \sup_{(i, j, \mathcal{T}_{i,j}^m)} P(E_{ij \cdot \mathcal{K}}^I) + O(|(i, j, \mathcal{T}_{i,j})|) \sup_{(i, j, \mathcal{T}_{i,j})} P(E_{ij \cdot \mathcal{K}}^{II}).
\end{align*}

Based on the definition of $\mathcal{T}_{i,j}^m$ and $\mathcal{T}_{i,j}$,
\begin{equation*}
    O(|(i, j, \mathcal{T}_{i,j}^m)|) = O \Bigg( \binom{n_N}{2}\sum_{l=n_N-2-v}^{n_N-2}\binom{n_N-2}{l} \Bigg) = O(n_N^{v+2}),
\end{equation*}
and 
\begin{equation*}
    O(|(i, j, \mathcal{T}_{i,j})|) = O \Bigg( \binom{n_N}{2}\sum_{l=0}^{n_N-2}\binom{n_N-2}{l} \Bigg) = O(2^{n_N-2}).
\end{equation*}

By choosing $\alpha = 2(1-\Phi(\sqrt{N}c_N/2))$ using the $c_N$ defined in (C3), we have
\begin{align*}
    \sup_{(i, j, \mathcal{T}_{i,j}^m)} P(E_{ij \cdot \mathcal{K}}^I) = & \sup_{(i, j, \mathcal{T}_{i,j}^m)} P\Bigg(|Z_{ij \cdot \mathcal{K}} - z_{ij \cdot \mathcal{K}}| > \frac{\sqrt{N}c_N/2}{\sqrt{N-|\mathcal{K}| - 3}}\Bigg) \\
    \leq & O(N-n_N) \cdot \exp \Bigg( (N-n_N-2) \log \frac{4-\frac{Nc_N^2}{4(N-|\mathcal{K}|-3)L^2}}{4+\frac{Nc_N^2}{4(N-|\mathcal{K}|-3)L^2}}\Bigg).
\end{align*}
The last inequality is from Lemma \ref{lemma:zBound}. Since the $c_N$ has an order of $O(N^{-b})$ from (C3), the $\log$ term in the end can be approximated using the following fact
\begin{align*}
    \log \frac{4-x}{4+x} \approx \frac{4+x}{4-x}\frac{-(4+x)-(4-x)}{(4+x)^2}=\frac{-8x}{16-x^2} \rightarrow -\frac{x}{2} \text{ as } x \rightarrow 0.
\end{align*}
Using $x=\frac{Nc_N^2}{4(N-|\mathcal{K}|-3)L^2}$, we have 
\begin{align*}
    \sup_{(i, j, \mathcal{T}_{i,j}^m)} P(E_{ij \cdot \mathcal{K}}^I) \leq O(N-n_N) \cdot \exp(-C_2(M) \cdot (N-n_N-2)c_N^2),
\end{align*}
where $C_2(M) = \frac{D_1}{L^2}$ where $D_1 = \frac{N}{8(N-|\mathcal{K}|-3)}$ is a constant in $(\frac{1}{8}, 1)$. Thus $C_2(M)$ is a function of only $M$ in the range $(\frac{1}{8}, 1)$ when $M$ is in $(0,1)$.

Similarly, we have 
\begin{align*}
    \sup_{(i, j, \mathcal{T}_{i,j})} P( E_{ij \cdot \mathcal{K}}^{II}) = & \sup_{(i, j, \mathcal{T}_{i,j})} P\Bigg(|Z_{ij \cdot \mathcal{K}}| \leq \frac{\Phi^{-1}(1-\frac{\alpha}{2})}{\sqrt{N - |\mathcal{K}| - 3}}\Bigg) \\
    \leq & \sup_{(i, j, \mathcal{T}_{i,j})} P\Bigg(|z_{ij \cdot \mathcal{K}}| - |Z_{ij \cdot \mathcal{K}}| > c_N \Big( 1-\frac{\sqrt{N}}{2\sqrt{N-|\mathcal{K}| - 3}} \Big) \Bigg),\\
    \leq & \sup_{(i, j, \mathcal{T}_{i,j})} P\Bigg(|Z_{ij \cdot \mathcal{K}} - z_{ij \cdot \mathcal{K}}| > c_N \Big( 1-\frac{\sqrt{N}}{2\sqrt{N-|\mathcal{K}| - 3}} \Big) \Bigg),
\end{align*}
where the second inequality is based on the fact that $\sup_{(i, j, \mathcal{T}_{i,j}^m)} |z_{ij \cdot \mathcal{K}}| \geq \sup_{(i, j, \mathcal{T}_{i,j}^m)} |\rho_{ij \cdot \mathcal{K}}| \geq c_N$, and the last inequality is because $|a-b| \geq ||a| - |b||$. 

Denote $D_2 = 1-\frac{\sqrt{N}}{2\sqrt{N-|\mathcal{K}| - 3}}$ which is a constant in range $(0, 1/2)$. By Lemma \ref{lemma:zBound}, we have 
\begin{align*}
    \sup_{(i, j, \mathcal{T}_{i,j})} P(E_{ij \cdot \mathcal{K}}^{II}) \leq O(N-n_N) \cdot \exp(-C_3(M) \cdot (N-n_N-2)c_N^2),
\end{align*}
where $C_3(M) = \frac{D_2^2}{2L^2} = \frac{D_2^2(1-M^2)^2}{2}$ which is a function of only $M$, and is in the range of $(0, \frac{1}{8})$.

In the end, we have 
\begin{align*}
    & P(\text{Error happens in the reverse order pruning PC algorithm}) \\
    \leq & O((n_N^{v+2} + 2^{n_N-2})(N-n_N)\exp(-(N-n_N-2)c_N^2)) \\
    = & O((n_N^{v+2} + 2^{n_N-2})(N-n_N)\exp(-N^{1-2b})\exp(n_N+2))  \\
    \stackrel{\text{(a)}}{=} & O(2^{n_N-2}N\exp(-N^{1-2b})\exp(n_N+2))  \\
    \stackrel{\text{(b)}}{\leq} & O(\exp(-N^{1-2b})) \rightarrow 0 \text{ as } N \rightarrow \infty,
\end{align*}
where $(a)$ is because
\begin{align*}
    & \lim_{N \rightarrow \infty}\frac{n_N^{v+2}}{2^{n_N-2}} = \lim_{N \rightarrow \infty}\frac{N^{d(N^e+2)}}{2^{N^d-2}} = \lim_{N \rightarrow \infty} 2^{d(N^e+2)\log_2N - (N^d-2)} \\
    = & 2^{\lim_{N \rightarrow \infty} dN^e\log_2N - N^d + 2d\log_2(N) +2} \rightarrow 2^{-\infty} = 0,
\end{align*}
because of $e < d$ in condition (C4), $-N^d$ is the dominant term, moreover, $N$ dominates $n_N$ since $n_N = O(N^d)$ in condition (C2); inequality (b) is because 
\begin{align*}
    & \lim_{N \rightarrow \infty} \frac{2^{n_N-2}N e^{n_N+2}}{e^{N^{1-2b}}} = \lim_{N \rightarrow \infty} \frac{2^{N^d-2}N e^{N^d+2}}{e^{N^{1-2b}}} = \lim_{N \rightarrow \infty} \frac{ e^{(N^d-2)\ln2+ \ln N +N^d+2}}{e^{N^{1-2b}}} \\
    = & e^{\lim_{N \rightarrow \infty} (N^d-2)\ln2 + \ln N + N^d + 2 - N^{1-2b}} \rightarrow e^{-\infty} = 0,
\end{align*}
because of $b < (1-d)/2$ in condition (C3), $-N^{1-2d}$ is the dominant term.

\subsection{Proof of Theorem \ref{them:cpdagconsistency}}\label{appendix:them:cpdagconsistency}
\begin{proof}
The Phase II of the algorithm infers directions of edges using matrix $S$. By Proposition \ref{prop:prop2}, if the $\mathcal{K}'$ in each entry of matrix $S$ satisfy Proposition \ref{prop:prop1}, the CPDAG will be the true CPDAG by the oracle version of the algorithm. The correctness of each $\mathcal{K}'$ is ensured by the correctness of Phase I of the algorithm. Therefore,
\begin{align*}
    & P(\widehat{G}_{CPDAG} = G_{CPDAG}) \\
    = & P(\hat{G}_{skeleton} = G_{skeleton}) P(\text{Phase II is correct } \vert \; \hat{G}_{skeleton} = G_{skeleton}) \\
    = & P(\hat{G}_{skeleton} = G_{skeleton}) \rightarrow 1 \text{ as } N \rightarrow \infty.
\end{align*}
\end{proof}

\vskip 0.2in

\bibliographystyle{IEEEtran}
\bibliography{myReferences}

\end{document}